\documentclass[11pt, reqno]{amsart}
\usepackage[T1]{fontenc}
\usepackage[american]{babel}
\usepackage[utf8]{inputenc}
\usepackage{csquotes}
\usepackage{geometry}
\usepackage{graphicx}
\usepackage[dvipsnames]{xcolor}
\usepackage{hyperref}
\usepackage{enumitem}
\usepackage{amsmath,amssymb,amsthm}
\usepackage{mathtools}  
\usepackage{mathrsfs}
\usepackage{cases}
\usepackage{tikz}

\newcommand{\C}{\mathcal{C}}
\newcommand{\D}{\mathcal{D}}
\newcommand{\E}{\mathbb{E}}
\newcommand{\F}{\mathcal{F}}
\newcommand{\G}{\mathscr{G}}
\renewcommand{\H}{\mathcal{H}}
\newcommand{\K}{\mathcal{K}}
\renewcommand{\L}{\mathcal{L}}
\newcommand{\N}{\mathbb{N}}
\renewcommand{\P}{\mathbb{P}}
\newcommand{\R}{\mathbb{R}}
\newcommand{\W}{\mathsf{W}}
\newcommand{\X}{\mathcal{X}}
\newcommand{\Y}{\mathcal{Y}}

\renewcommand{\restriction}[1]{\raisebox{-.3ex}{$|$}_{#1}}

\DeclareMathOperator{\as}{a.s.}
\DeclareMathOperator{\Var}{Var}
\DeclareMathOperator{\Cov}{Cov}

\newtheoremstyle{plain}{}{}{\itshape}{}{\bfseries}{.}{0.5em}{}
\newtheoremstyle{remark}{}{}{}{}{\itshape}{.}{0.5em}{}

\theoremstyle{plain}
\newtheorem{theorem}{Theorem}
\newtheorem{proposition}{Proposition}
\newtheorem{corollary}{Corollary}
\newtheorem{lemma}{Lemma}

\newtheorem{definition}{Definition}

\theoremstyle{remark}
\newtheorem{remark}{Remark}

\numberwithin{equation}{section}

\title[On the law of the limiting fluctuation process]{
  Quantifying uncertainty in wide two-layer neural networks: on the law of the limiting fluctuation process. 
}

\author{Arnaud Descours\textsuperscript{*}}
\address[*]{Universit\'e Lyon 1, LSAF, EA 2429, Lyon, France}

\author{Arnaud Guillin\textsuperscript{\ensuremath{\dagger}}}
\address[$\dagger$]{Universit\'e Clermont-Auvergne, LMBP, Aubi\`ere, France}

\author{Geoffrey Lacour\textsuperscript{\ensuremath{\ddagger}}}
\address[$\ddagger$]{Université Paris-Saclay, INRAE, MaIAGE, UR 1404, 78350 Jouy-en-Josas, France}

\author{Manon Michel\textsuperscript{\ensuremath{\dagger}}}
\address[$\dagger$]{Universit\'e Clermont-Auvergne, LMBP, Aubi\`ere, France}

\author{Boris Nectoux\textsuperscript{\ensuremath{\dagger}}}
\address[$\dagger$]{Universit\'e Clermont-Auvergne, LMBP, Aubi\`ere, France}

\author{Paul Stos\textsuperscript{\ensuremath{\dagger}}}
\address[$\dagger$]{Universit\'e Clermont-Auvergne, LMBP, Aubi\`ere, France}

\date{}

\begin{document}

\begin{abstract}
  Uncertainty quantification in neural networks prediction is a main
  issue for usual applications. Our approach seeks at reducing
  computation costs by directly evaluating uncertainty using PDE's
  information on the asymptotic variance, rather than the deep
  ensemble method which may be seen as a Monte Carlo estimation of
  the prediction, requiring the training of multiple networks. We thus
  study the law of the limiting process describing the random
  fluctuations around the mean-field limit of wide two-layer neural
  networks trained by stochastic gradient descent in a weak-noise
  regime. Building on a recent trajectorial central limit theorem, in
  which this limit is characterized as the weak solution of a linear
  stochastic evolution equation, we identify its law explicitly. More
  precisely, we show that it is a centered Gaussian process in the
  dual of a weighted Sobolev space, and we derive a closed covariance
  representation for the finite-dimensional distributions obtained by
  testing it against smooth functions. This covariance is expressed
  through the solution of a backward transport equation with a
  nonlocal source term, whose coefficients are driven by the
  mean-field trajectory. 
  As a consequence, by testing against the activation function at a fixed input,
  we obtain an expression for the limiting variance of the corresponding
  network-output fluctuations.
  We illustrate this result numerically on a
  one-dimensional regression example.
\end{abstract}

\maketitle
\section{Introduction}

Neural networks have become standard tools for a wide range of high-dimensional prediction tasks~\cite{goodfellow-bengio-courville-2016,lecun-bengio-hinton-2015,schmidhuber-2015}, and their empirical success has prompted sustained efforts to understand the mathematical mechanisms underlying their behavior~\cite{suh-cheng-2025,shalev-shwartz-ben-david-2014,bach-2024}. 

In this work we focus on the classical supervised learning framework for regression. We observe a stream of independent data points $(x_k,y_k)\in \X\times\Y$ drawn from a common (unknown) distribution $\pi$. The goal is to learn a predictor $\hat{y}$ of the label $y$ that generalizes to previously unseen inputs $x$.
We model this predictor by a two-layer neural network of width $N \ge 1$, namely
\begin{equation}\label{eq:network}
  \hat y_\theta(x)
  =
  \frac1N\sum_{i=1}^N \sigma_*(x,\theta^i),
\end{equation}
where $\theta=(\theta^1,\ldots,\theta^N)\in(\R^d)^N$ denotes the collection of trainable parameters and $\sigma_*:\X\times\R^d\to\R$ is an activation function. The factor $1/N$ is the standard \emph{mean-field} normalization of the network output; it is made for convenience and can be absorbed into the activation by simple redefinition.

In principle, one would like to choose $\theta$ so as to minimize the \emph{population risk}
\[
  \E_\pi\big[\mathsf{L}(\hat y_\theta(x),y)\big],
\]
where $\mathsf{L} : \Y\times\Y \to \R$ is a loss function. We work with the quadratic loss $\mathsf{L}(a,b)=\frac12 |a-b|^2$, as is customary for regression problems. Since the distribution $\pi$ is unknown, the parameters are learned from the observed samples by stochastic gradient descent (SGD) or its variants~\cite{bottou-2010,bottou-curtis-nocedal-2018}.
In the present setting, parameters are initialized i.i.d.\ from a common distribution $\mu_0 \in \mathcal{P}(\R^d)$, and then updated according to noisy SGD with fixed learning rate $\alpha > 0$. More precisely, for every $k\in\N$ and every $i\in\{1,\ldots,N\}$, we consider the \emph{weak-noise} dynamics
\begin{equation}\label{eq:SGD}\tag{SGD}
    \theta^i_{k+1}
    =
    \theta^i_k
    +
    \dfrac{\alpha}{N}
    \bigl(y_k-\hat y_{\theta_k}(x_k)\bigr)
    \nabla_{\theta^i}\sigma_*(x_k,\theta^i_k)
    +
    \dfrac{\varepsilon^i_k}{N^\beta},
\end{equation}
with $\beta>1/2$. 
The random variables $\varepsilon^i_k\sim\mathcal N(0,I_d)$ are assumed independent across neurons and time steps, and the data points $(x_k,y_k)_{k\in\N}$ are i.i.d.\ with common law $\pi$. This idealization is standard in the learning literature and is relevant whenever data are abundant or streaming, so that repeated reuse of a fixed finite dataset is not the leading phenomenon~\cite{bottou-2010,saad-solla-1995}. For ease of exposition, we restrict here to batch size one (also known as \emph{online} or \emph{one-pass} SGD~\cite{bottou-2010}), although the results and arguments below extend to the random mini-batch setting considered in~\cite{descours-guillin-michel-nectoux-2024}. 

Providing a precise and rigorous probabilistic description of the training dynamics of neural networks remains challenging, even in the basic two-layer architecture trained by stochastic gradient descent. For wide two-layer networks, a particularly fruitful approach has been to model training as a large interacting particle system and to track it via the empirical distribution of the neurons' parameters~\cite{chen-rotskoff-bruna-vanden-eijnden-2020,mei-montanari-nguyen-2018,sirignano-spiliopoulos-2020a}, or its rescaled-in-time version:
\begin{equation}\label{eq:empirical-measure}
  \mu^N_t
  =
  \frac1N\sum_{i=1}^N \delta_{\theta^i_{\lfloor Nt\rfloor}}, \qquad t \ge 0.
\end{equation}
As the number of neurons grows to infinity, this empirical process converges in probability (in an appropriate topology) to a deterministic trajectory $\bar\mu=(\bar\mu_t)_{t\ge0}$~\cite{mei-montanari-nguyen-2018,sirignano-spiliopoulos-2020a,descours-guillin-michel-nectoux-2024} characterized as the solution to a nonlinear measure-valued evolution equation (see~\eqref{eq:LLN}). This result can be viewed as a law of large numbers for the training dynamics of~\eqref{eq:SGD}.

Beyond this first-order description, a natural next question would be to understand the random deviations of $\mu^N$ around its mean-field limit $\bar\mu$. In the recent papers~\cite{sirignano-spiliopoulos-2020b,descours-guillin-michel-nectoux-2024}, a trajectorial central limit theorem was derived for the \emph{fluctuation process}
\[
  \eta^N_t
  =
  \sqrt N\,(\mu^N_t-\bar\mu_t), \qquad t \ge 0.
\]
Under the assumptions recalled below, the sequence $(\eta^N)_{N\ge1}$ was found to converge in distribution to a limiting process $\eta^*$, characterized as the weak solution of a linear stochastic evolution equation driven by an infinite-dimensional Gaussian noise (see~\eqref{eq:CLT}). 

While this central limit theorem identifies the fluctuation limit at the level of weak convergence, its law remains rather implicit. The purpose of the present paper is to make this law explicit. To the best of our knowledge, this is the first work to provide a complete characterization of the law of the output fluctuations in this setting.
We prove that the limiting fluctuation process $\eta^*$ is a centered Gaussian process in the dual of a weighted Sobolev space. More concretely, for every finite family of smooth test functions $\varphi_1,\ldots,\varphi_n$, we show that the finite-dimensional process
\[
  \bigl(\langle \varphi_1,\eta^*\rangle,\ldots,
  \langle \varphi_n,\eta^*\rangle\bigr)
\]
is centered Gaussian, and we explicitly characterize the evolution equation for its covariance structure. This formula is expressed through the solution of a backward transport equation~\eqref{eq:T} with a nonlocal source term, whose coefficients are driven by the mean-field trajectory $\bar\mu$. This equation is chosen by duality so as to cancel the deterministic drift terms in the linear stochastic evolution equation~\eqref{eq:CLT} satisfied by $\eta^*$. As a result, for every smooth test function $\varphi$ and every training time $t\ge0$, the random variable $\langle\varphi,\eta^*_t\rangle$ can be represented as the sum of an initial fluctuation term and a Gaussian noise term. The Gaussianity and the covariance formula then follow from the law of the initial fluctuation, the covariance structure of the driving noise, and their independence.

This characterization has a direct interpretation for the network output. Indeed, for any fixed input $x\in\X$, the prediction of the finite-width network after $\lfloor Nt\rfloor$ iterations of~\eqref{eq:SGD} can be written as the empirical-measure observable
\[
  \hat{y}_{\theta_{\lfloor Nt\rfloor}}(x)
  =
  \langle \sigma_*(x,\cdot),\mu^N_t\rangle.
\]
Combining the law of large numbers and the central limit theorem recalled above, we obtain the asymptotic expansion
\[
  \langle \sigma_*(x,\cdot),\mu^N_t\rangle
  \approx
  \langle \sigma_*(x,\cdot),\bar\mu_t\rangle
  +
  \frac1{\sqrt N}
  \langle \sigma_*(x,\cdot),\eta^*_t\rangle.
\]
Taking $\varphi=\sigma_*(x,\cdot)$ in this approximation and applying our covariance formula therefore gives, at least numerically, access to the limiting variance of the order $1/\sqrt{N}$ output fluctuations at the prediction point $x$. 
We illustrate this result on the one-dimensional regression example introduced by Mei, Montanari and Nguyen~\cite{mei-montanari-nguyen-2018}. This example is simple enough to remain numerically tractable while still being rich enough to provide a meaningful test of the predicted variance profile. 
Implementing the variance formula requires solving the backward transport equation~\eqref{eq:T} and repeatedly evaluating nonlocal expectations against the data distribution $\pi$ along the mean-field trajectory. In the experiment of Section~\ref{sec:numerics}, rather than discretizing the deterministic mean-field equation~\eqref{eq:LLN} for $\bar\mu$ directly, we approximate this trajectory by a large-width neural network used as a proxy for the mean-field dynamics. This simplifies the computation, but introduces an additional source of randomness through the proxy trajectory. 
In a fully deterministic pipeline, one would instead discretize the mean-field equation~\eqref{eq:LLN} for $\bar\mu$ directly, and use the resulting trajectory to solve the backward equation~\eqref{eq:T}. Once these two equations have been solved, the variance formula requires no further sampling of independently trained networks. If implemented efficiently, such a procedure would provide a principled route toward computing asymptotic predictive uncertainty for wide neural networks.

\subsection*{Related works} $~$

{\bf Mean field analysis.} The mean-field analysis of wide neural networks has developed rapidly in recent years. For two-layer networks, mean-field limits and their connection with Wasserstein gradient flows were studied, among others, by Mei, Montanari and Nguyen~\cite{mei-montanari-nguyen-2018}, Chizat and Bach~\cite{chizat-bach-2018}, and Rotskoff and Vanden-Eijnden~\cite{rotskoff-vanden-eijnden-2022}. A trajectorial law of large numbers and central limit theorem for stochastic-gradient dynamics on a fixed time interval were obtained by Sirignano and Spiliopoulos~\cite{sirignano-spiliopoulos-2020a, sirignano-spiliopoulos-2020b}, and later extended at the level of the full trajectory (under mini-batching and noisy regimes) by Descours, Guillin, Michel and Nectoux~\cite{descours-guillin-michel-nectoux-2024}. Quantitative propagation-of-chaos estimates for continuous-time counterparts of SGD in wide two-layer networks were also obtained in~\cite{de-bortoli-durmus-fontaine-simsekli-2020}. Related mean-field viewpoints have also been developed beyond shallow networks, including deep neural networks~\cite{sirignano-spiliopoulos-2022, nguyen-pham-2023,fang-lee-yang-zhang-2021}, residual neural networks~\cite{chaintron-chizat-maass-2026}, and transformer models~\cite{rigollet-2025,chen-lin-polyanskiy-rigollet-2025}. The mean field approach has also been considered for bayesian neural networks~\cite{Descours-huix-moulines-guillin-michel-nectoux-2026}, where a law of large numbers and a CLT have been proved. 

{\bf Fluctuation theory of particles system.} The functional-analytic formulation used in this paper is also rooted in the
classical fluctuation theory of mean-field interacting particle systems. In that
setting, fluctuations around a law-of-large-numbers limit are often described by
distribution-valued Gaussian processes solving linear stochastic evolution
equations; see, for instance, Sznitman~\cite{sznitman-1991}, Hitsuda and
Mitoma~\cite{hitsuda-mitoma-1986}, Fernandez and
Méléard~\cite{fernandez-meleard-1997}, Jourdain and
Méléard~\cite{jourdain-meleard-1998}, and Kurtz and Xiong~\cite{kurtz-xiong-2004}.
Hilbertian and Sobolev-dual frameworks provide a natural setting for such
limits, both for tightness arguments and for the characterization of
distribution-valued processes through their action on test functions. This is
precisely the viewpoint adopted here, following in
particular~\cite{fernandez-meleard-1997}.

{\bf Uncertainty quantification in neural networks}. Finally, our covariance formula can be interpreted as an asymptotic characterization of the variability of the trained network output. This connects with the broader literature on uncertainty estimates for neural-network predictions, including early heteroscedastic regression approaches~\cite{nix-weigend-1994} and modern ensemble-based predictive uncertainty methods~\cite{lakshminarayanan-pritzel-blundell-2017}. 
Indeed, deep ensembles are now widely regarded as one of the reference approaches for improving the reliability of neural-network predictions, especially when predictive uncertainty needs to be quantified. Systematically introduced in \cite{lakshminarayanan-pritzel-blundell-2017}, the method consists in training several neural networks independently, typically from different random initializations, and then aggregating their predictive distributions. Despite its conceptual simplicity, this approach has proved highly competitive with more complex Bayesian methods, in particular in terms of probability calibration, robustness, and out-of-distribution detection. Its empirical importance is particularly high in the presence of complex loss landscapes: in \cite{fort2019deep}, the authors argue that different ensemble members may explore distinct regions of parameter space and thereby produce significant functional diversity, which improves the quality of uncertainty estimates. The relevance of deep ensembles is further supported in settings involving data distribution shift: in fact in \cite{ovadia2019can}, they show that deep ensembles are among the most robust methods for maintaining reliable uncertainty estimates when test data differ from training data. These contributions are part of a broader concern with the calibration of modern neural networks, highlighted for a classification task in \cite{guo2017calibration} which shows that highly accurate neural networks may nevertheless output poorly calibrated probabilities. Finally, several variants have been proposed to reduce the computational cost of ensembles. Snapshot ensembles, introduced by~\cite{huang2017snapshot}, 
exploit a cyclic learning-rate schedule to obtain multiple models during a single training run, while BatchEnsemble, proposed by~\cite{wen-tran-ba-2020}, shares most parameters across ensemble members in order to reduce memory and computational overhead. Overall, deep ensembles provide a particularly attractive trade-off between ease of implementation, predictive performance, and robust uncertainty estimation, which explains their frequent use as a strong empirical baseline in modern deep learning.
Our approach could be used to reduce computational costs by directly evaluating uncertainty using PDE's information on the asymptotic variance, rather than a crude Monte Carlo estimation. We illustrate such an application in a simple one-dimensional example in Section~\ref{sec:numerics}.

\subsection*{Outline of the paper}

The rest of the paper is organized as follows. Section~\ref{sec:Sobolev} introduces the weighted Sobolev spaces and embeddings used throughout. In Section~\ref{sec:LLN-CLT}, we recall the two main results from~\cite{descours-guillin-michel-nectoux-2024} that serve as the starting point of this work: a law of large numbers for the trajectory of the empirical measure of the parameters, and a central limit theorem for the associated fluctuation process around the mean-field limit (see Theorems~\ref{thm:LLN} and~\ref{thm:CLT}). Our main result (Theorem~\ref{thm:main}) is stated in Section~\ref{sec:main} and proved in Section~\ref{sec:proof}. The study of the backward transport equation~\eqref{eq:T} is carried out separately in Section~\ref{sec:pde}. Finally, Section~\ref{sec:numerics} contains a numerical illustration in a one-dimensional example.

\section{Functional setting}
\label{sec:Sobolev}

In this section, we introduce the class of weighted Sobolev spaces used throughout the paper and recall the Sobolev embeddings needed later on. 

\subsection{Weighted Sobolev spaces}

Let $E$ be a Banach space, and let $E'$ denote its dual. For any $\mu \in E'$ and $\varphi \in E$, we write $\mu(\varphi)=\langle \varphi,\mu\rangle_E$, or simply $\langle \varphi,\mu\rangle$ when no confusion is possible. For a multi-index $m=(m_1,\ldots,m_d)\in\N^d$, we set
\[
  \partial^m = \partial_1^{m_1}\cdots \partial_d^{m_d},
  \qquad
  |m|=m_1+\cdots+m_d.
\]
Let $\C_c^\infty(\R^d)$ be the space of smooth compactly supported functions on $\R^d$. For $J, j\in\N$ and $\varphi \in \C_c^\infty(\R^d)$, define
\[
  \|\varphi\|_{\H^{J,j}}
  =
  \bigg(
    \sum_{|m|\leq J}
    \int_{\R^d}
    \frac{|\partial^m \varphi(w)|^2}{1+|w|^{2j}}
    \,dw
  \bigg)^{1/2}.
\]
We denote by $\H^{J,j}(\R^d)$ the closure of $\C_c^\infty(\R^d)$ with respect to this norm. Endowed with the induced scalar product, $\H^{J,j}(\R^d)$ is thus a separable Hilbert space. We denote its dual by $\H^{-J,j}(\R^d)$.

\subsection{Sobolev embeddings}

For $J, j\in\N$, let $\C^{J,j}(\R^d)$ denote the space of functions $\varphi \in \C^J(\R^d)$ such that, for every multi-index $m\in\N^d$ with $|m|\leq J$, 
\[
  \lim_{|w|\to\infty}\frac{|\partial^m \varphi(w)|}{1+|w|^j} = 0.
\]
We equip $\C^{J,j}(\R^d)$ with the norm
\[
  \|\varphi\|_{\C^{J,j}}
  =
  \sum_{|m|\leq J}
  \sup_{w\in\R^d}
  \frac{|\partial^m \varphi(w)|}{1+|w|^j}.
\]
We also write $\C_b(\R^d)$ for the space of bounded continuous functions on $\R^d$, endowed with the supremum norm, and $\C_b^\infty(\R^d)$ for the space of smooth functions on $\R^d$ whose derivatives of every order are bounded. Note that for every $J \in \N$, $\C_b^\infty(\R^d) \subset \H^{J,j}(\R^d)$ whenever $j>d/2$. More generally, if $\chi\in \C_c^\infty(\R^d,[0,1])$ is equal to $1$ in a neighborhood of $0$, then $w\mapsto \big(1-\chi(w)\big)|w|^\ell \in \H^{J,j}(\R^d)$ whenever $j-\ell>d/2$.

We shall use the following weighted Sobolev embeddings from~\cite[Section~2]{fernandez-meleard-1997}.

\begin{proposition}\label{prop:embedding}
Let $J, K, j, k \in\N$ with $K, k > d/2$. Then
\[
  \H^{J+K,j}(\R^d)\hookrightarrow \C^{J,j}(\R^d)
  \quad\text{and}\quad
  \H^{J+K,j}(\R^d) \hookrightarrow \H^{J,j+k}(\R^d).
\]
\end{proposition}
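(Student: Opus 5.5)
The plan is to prove both embeddings first for $\varphi\in\C_c^\infty(\R^d)$, with a constant independent of $\varphi$, and then extend them by density. Since $\H^{J+K,j}$ is by definition the closure of $\C_c^\infty$, an inequality $\|\varphi\|_{\C^{J,j}}\le C\|\varphi\|_{\H^{J+K,j}}$ on $\C_c^\infty$ extends to the closure. The limit is identified as a $\C^J$ function because $\C^{J,j}$ is complete: it is a closed subspace of weighted $\C^J$ functions under uniform convergence of derivatives on compacts. The decay condition at infinity holds for compactly supported functions and is preserved under limits in the $\C^{J,j}$ norm. The same density argument handles the second embedding.

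\textbf{First embedding.} Set $\rho(w)=(1+|w|^{2j})^{-1/2}$ and, for $|m|\le J$, consider $\psi=\rho\,\partial^m\varphi$. By the Leibniz rule, $\partial^\alpha\psi$ for $|\alpha|\le K$ is a combination of terms $\partial^{\beta}\rho\,\partial^{m+\alpha-\beta}\varphi$. A direct computation gives $|\partial^\beta\rho|\le C_\beta\,\rho$ for all $\beta$, since derivatives of $(1+|w|^{2j})^{-1/2}$ gain decay. Hence $\|\psi\|_{H^K(\R^d)}\le C\|\varphi\|_{\H^{J+K,j}}$, where $H^K$ is the unweighted Sobolev space. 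The classical Sobolev embedding $H^K(\R^d)\hookrightarrow \C_b(\R^d)$, valid for $K>d/2$ and proved via the Fourier transform and Cauchy--Schwarz using $\int(1+|\xi|^2)^{-K}d\xi<\infty$, gives $\sup|\psi|\le C\|\psi\|_{H^K}$. Finally, $\rho(w)\ge c/(1+|w|^j)$, so $\sup_w |\partial^m\varphi(w)|/(1+|w|^j)\le C\|\varphi\|_{\H^{J+K,j}}$. Summing over $|m|\le J$ gives the bound.

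\textbf{Second embedding.} For $|m|\le J$, write
\[
\int\frac{|\partial^m\varphi|^2}{1+|w|^{2(j+k)}}dw\le \Big(\sup_w\frac{|\partial^m\varphi(w)|}{1+|w|^j}\Big)^2\int\frac{(1+|w|^j)^2}{1+|w|^{2j+2k}}dw .
\]
The last integral is finite exactly because $2k>d$. Combined with the first embedding, this yields $\|\varphi\|_{\H^{J,j+k}}\le C\|\varphi\|_{\H^{J+K,j}}$.

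The main technical point is the weight-commutation estimate $|\partial^\beta\rho|\le C\rho$, which lets the weight pass through derivatives. Everything else is the standard unweighted Sobolev inequality together with the integrability condition $k>d/2$.
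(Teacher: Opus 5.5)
Your proof is correct. The paper gives no argument of its own and simply quotes both embeddings from Fernandez--M\'el\'eard~\cite{fernandez-meleard-1997} (Section~2), so your write-up supplies what the paper outsources, by the standard route. The first embedding follows from the commutation bound $|\partial^\beta\rho|\le C_\beta\rho$ for $\rho=(1+|w|^{2j})^{-1/2}$ together with the unweighted inequality $\sup|\psi|\le C\|\psi\|_{H^K}$ for $K>d/2$. The second follows by composing with $\C^{J,j}(\R^d)\hookrightarrow\H^{J,j+k}(\R^d)$, which only uses $\int_{\R^d}(1+|w|^{j})^2(1+|w|^{2j+2k})^{-1}\,dw<\infty$ for $2k>d$.

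One point deserves a sentence in your density step. The extended map must be injective, so that it is a genuine embedding and your $\C^{J}$ limit is the same object as the Sobolev element. This holds because convergence in any of the spaces involved implies $L^2_{\mathrm{loc}}$ convergence. Hence, for an approximating sequence $\varphi_n\in\C_c^\infty(\R^d)$, the two limits coincide almost everywhere.
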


\section{Law of large numbers and central limit theorem}
\label{sec:LLN-CLT}

In this section, we recall the law of large numbers and central limit theorem proved in~\cite{descours-guillin-michel-nectoux-2024} for the trajectory $\mu^N = (\mu^N_t)_{t \ge 0}$ of the scaled empirical measure of the parameters of a wide two-layer neural network. These two results form the starting point of our analysis. Accordingly, throughout this paper, we work in the framework of~\cite{descours-guillin-michel-nectoux-2024} and adopt the notation introduced there, which we summarize below.

\subsection{Assumptions}

Let $E$ be a metric space. We denote by $\D(\R_+,E)$ the space of càdlàg functions from $\R_+=[0,\infty)$ to $E$, endowed with the Skorokhod topology. Note that for every $j \in \N$ and every $N \ge 1$, $\mu^N$ is a random element of $\D\big(\R_+,\C^{0,j}(\R^d)'\big)$. Hence, by Proposition~\ref{prop:embedding}, it is also a random element of $\D\big(\R_+,\H^{-K,j}(\R^d)\big)$ for every $K  > d/2$.

Let $\mathcal{P}_p(\R^d)$ denote the Wasserstein space of order $p \ge 1$, namely
\[
  \mathcal{P}_p(\R^d)
  =
  \Big\{
  \mu \in \mathcal{P}(\R^d), \ \int_{\R^d} |w|^p\,\mu(dw) < \infty
  \Big\},
\]
endowed with the distance
\[
  \W_p(\mu,\nu)
  =
  \Big(\inf\big\{ \E\big[|X-Y|^p\big], \ \L(X)=\mu,\ \L(Y)=\nu \big\}\Big)^{1/p}.
\]
Recall that $(\mathcal{P}_p(\R^d), \W_p)$ is Polish~\cite[Chapter~5]{santambrogio-2015}. Note that for every $p \ge 1$ and every $N \ge 1$, $\mu^N$ is also a random element of $\D\big(\R_+,\mathcal{P}_p(\R^d)\big)$.

For $N \ge 1$, we introduce the filtration
\begin{equation}\label{eq:filtration}
  \begin{aligned}
    \F_0^N &= \sigma\bigl(\theta_0^1, \dots, \theta_0^N\bigr), \\[0.3em]
    \F_k^N &=
    \sigma\bigl(\theta_0^i,\,(x_\ell,y_\ell),\,\varepsilon_\ell^i,\
    0\le \ell\le k-1,\ 1\le i\le N\bigr), \quad k \ge 1.
  \end{aligned}
\end{equation}

We also fix the following Sobolev exponents:
\begin{equation}\label{eq:exponents}
  L=\lceil d/2\rceil+3, 
  \qquad 
  \gamma=4\lceil d/2\rceil+5, 
  \qquad
  J_0 \ge 4\lceil d/2\rceil + 8, 
  \qquad 
  j_0 = \lceil d/2\rceil + 2.
\end{equation}

We shall work under the following assumptions, inherited from~\cite{descours-guillin-michel-nectoux-2024}:
\begin{enumerate}[label={(A\arabic*)}, ref={(A\arabic*)}, font=\bfseries, itemsep=3pt, topsep=3pt]
    \item\label{assump:A1} The activation function $\sigma_* :  \X \times \R^d \to \R$ belongs to $\C_b^\infty(\X \times \R^d)$.
    \item\label{assump:A2} The sequence of data points $(x_k,y_k)_{k \in \N}$ is i.i.d.\ with common law $\pi \in \mathcal{P}(\X \times \Y)$ such that $\E\big[|y|^{16(\gamma+1)}\big] < \infty$. Moreover, for every $k \in \N$, $(x_k,y_k)$ is independent of $\F_k^N$.
    \item\label{assump:A3} The initial parameters $\theta_0^1, \dots, \theta_0^N$ are i.i.d.\ with common law $\mu_0 \in \mathcal{P}(\R^d)$, and $\mu_0$ is compactly supported.
    \item\label{assump:A4} The noise variables $(\varepsilon_k^1, \dots, \varepsilon_k^N)_{k\in \N}$ are i.i.d.\ with law $\mathcal{N}(0,I_d)$. Moreover, for every $k \in \N$, $(\varepsilon_k^1, \dots, \varepsilon_k^N)$ is independent of $\F_k^N$.
\end{enumerate}

\begin{remark}
  Assumption~\ref{assump:A1} is made for convenience; see Remark 5 after Theorem 1 in~\cite{descours-guillin-michel-nectoux-2024} for a discussion of the regularity assumptions on the activation function.
\end{remark}

\subsection{Law of large numbers}

The next theorem establishes the convergence in probability of the sequence $(\mu^N)_{N \geq 1}$ in $\D\big(\R_+, \mathcal{P}_{\gamma}(\R^d)\big)$ towards a deterministic continuous trajectory $\bar\mu$.

\begin{theorem}[\cite{descours-guillin-michel-nectoux-2024}]\label{thm:LLN}
Assume~\ref{assump:A1}--\ref{assump:A4} and $\beta > 1/2$. Then the sequence $(\mu^N)_{N \ge 1}$ converges in probability in $\D\big(\R_+,\mathcal{P}_\gamma(\R^d)\big)$ to a deterministic element $\bar\mu \in \C\big(\R_+,\mathcal{P}_1(\R^d)\big)$ characterized as the unique solution in $\C\big(\R_+,\mathcal{P}_1(\R^d)\big)$ to the following nonlinear measure-valued evolution equation: 
\begin{equation}\label{eq:LLN}
  \begin{aligned}
    &\forall \varphi \in \C_b^\infty(\R^d),\ \forall t \ge 0, \\
    &\qquad 
    \langle\varphi,\bar\mu_t\rangle = \langle \varphi,\mu_0\rangle
    + \alpha \int_0^t \E_\pi\Big[
      \bigl(y-\langle \sigma_*(x, \cdot),\bar\mu_s\rangle\bigr)
      \big\langle \nabla \varphi \cdot \nabla \sigma_*(x, \cdot),\bar\mu_s\big\rangle
    \Big]
    \,ds.
  \end{aligned}
\end{equation}
In addition, the limit $\bar\mu$ also belongs to $\C\big(\R_+,\H^{-L,\gamma}(\R^d)\big)$, and equation~\eqref{eq:LLN} remains valid for test functions in $\H^{L,\gamma}(\R^d)$.
\end{theorem}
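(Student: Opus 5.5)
The plan follows the standard three-step scheme of~\cite{sirignano-spiliopoulos-2020a,descours-guillin-michel-nectoux-2024}: tightness, identification of limit points via~\eqref{eq:LLN}, and uniqueness of the limit, which upgrades convergence in law to convergence in probability because the limit is deterministic.

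\textbf{Step 1: pre-limit equation and error terms.} For $\varphi\in\C_b^\infty(\R^d)$ we Taylor-expand $\varphi(\theta^i_{k+1})-\varphi(\theta^i_k)$ using~\eqref{eq:SGD}, average over $i$, and sum over $k<\lfloor Nt\rfloor$. This yields
\[
\langle\varphi,\mu^N_t\rangle=\langle\varphi,\mu^N_0\rangle+\alpha\int_0^t \E_\pi\Big[\bigl(y-\langle\sigma_*(x,\cdot),\mu^N_s\rangle\bigr)\langle\nabla\varphi\cdot\nabla\sigma_*(x,\cdot),\mu^N_s\rangle\Big]ds+M^N_t+R^N_t .
\]
Here $M^N_t$ collects two martingale pieces, namely the centering of the data term with respect to $\pi$ and the linear term $N^{-1-\beta}\sum\nabla\varphi\cdot\varepsilon^i_k$. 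The remainder $R^N_t$ collects the second-order Taylor terms and the discretization error coming from $\lfloor Nt\rfloor$.

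Using~\ref{assump:A2} and~\ref{assump:A4}, the martingale has quadratic variation of order $N\cdot N^{-2}+N\cdot N^{-1-2\beta}$, hence $\E\sup_{t\le T}|M^N_t|^2\to0$. The second-order noise term $\frac1{2N^{2\beta}}\langle\Delta\varphi\rangle$, summed over $Nt$ steps, is of order $N^{1-2\beta}\to0$ exactly because $\beta>1/2$. The other second-order terms are $O(1/N)$.

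To work in $\mathcal P_\gamma$ we also need uniform moment bounds
\[
\sup_N\E\Big[\sup_{t\le T}\langle|w|^{q},\mu^N_t\rangle\Big]<\infty
\]
for large $q$. These follow by the same expansion applied to $|w|^{q}$ with a discrete Gronwall argument, using compact support in~\ref{assump:A3}, boundedness of $\nabla\sigma_*$, and the moments of $y$ in~\ref{assump:A2}.

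\textbf{Step 2: tightness and identification.} Tightness in $\D(\R_+,\mathcal P_\gamma)$ comes from the moment bounds, which give compact containment in $\mathcal P_\gamma$ via uniform integrability of $|w|^\gamma$, combined with an Aldous-type modulus estimate. The modulus estimate holds because jumps are $O(1/N)$ and increments over $[s,t]$ are controlled by $|t-s|$ plus the vanishing errors. Then, along a subsequence converging in law, we pass to the limit in the equation of Step 1. The map $\mu\mapsto\int_0^t\E_\pi[\cdots]ds$ is continuous on continuous paths in $\mathcal P_1$, since the integrands are bounded Lipschitz in $w$. Since jumps vanish, limit points are continuous and satisfy~\eqref{eq:LLN} almost surely.

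\textbf{Step 3: uniqueness (main obstacle).} We expect uniqueness to be the main difficulty. Given the path $\bar\mu$, equation~\eqref{eq:LLN} is the continuity equation for the velocity field
\[
b_s(w)=\alpha\E_\pi\big[(y-\langle\sigma_*(x,\cdot),\bar\mu_s\rangle)\nabla\sigma_*(x,w)\big],
\]
which is bounded and Lipschitz in $w$ by~\ref{assump:A1} and~\ref{assump:A2}. So $\bar\mu_t=(X_t)_\#\mu_0$ for the associated flow. For two solutions $\mu,\nu$, coupling the flows started from the same initial point gives
\[
|b^\mu_s-b^\nu_s|\le C\,\W_1(\mu_s,\nu_s),\qquad \W_1(\mu_t,\nu_t)\le C\int_0^t\W_1(\mu_s,\nu_s)\,ds,
\]
and Gronwall yields $\mu=\nu$. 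Existence also follows, via a Picard iteration, though it is already given by the limit points. Uniqueness makes the limit deterministic, so convergence in law upgrades to convergence in probability.

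\textbf{Sobolev regularity.} Finally, $\bar\mu\in\C(\R_+,\H^{-L,\gamma})$ follows from Proposition~\ref{prop:embedding}: $\H^{L,\gamma}\hookrightarrow\C^{0,\gamma}$ since $L>d/2$, and $\bar\mu_t$ has finite $\gamma$-moments locally uniformly in $t$. Extending~\eqref{eq:LLN} to $\varphi\in\H^{L,\gamma}$ is done by density of $\C_c^\infty$, using $\H^{L,\gamma}\hookrightarrow\C^{1,\gamma}$ to control $\nabla\varphi$.
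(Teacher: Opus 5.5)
This theorem is not proved in the paper; it is imported from \cite{descours-guillin-michel-nectoux-2024}. Your outline is correct and follows the standard scheme used there: the pre-limit equation with martingale terms and an $O(N^{1-2\beta})$ second-order noise correction, uniform moment bounds, tightness in $\D(\R_+,\mathcal P_\gamma(\R^d))$, identification of limit points, uniqueness via the Lipschitz flow and a $\W_1$--Gronwall estimate, and the upgrade from convergence in law to a deterministic limit into convergence in probability.

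One point needs tightening, in the last paragraph. Finite $\gamma$-moments only show that $\bar\mu_t\in\H^{-L,\gamma}(\R^d)$ with locally bounded norm. Norm-continuity in time needs an estimate uniform over test functions. You can get it from the extended equation together with $\H^{L,\gamma}(\R^d)\hookrightarrow\C^{1,\gamma}(\R^d)$, which gives $|\langle\varphi,\bar\mu_t-\bar\mu_s\rangle|\le C_T|t-s|\,\|\varphi\|_{\H^{L,\gamma}}$ for $s,t\le T$.
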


\subsection{Central limit theorem}

From now on, we assume that Assumptions~\ref{assump:A1}--\ref{assump:A4} hold and $\beta > 3/4$. For $N \ge 1$, we define the \emph{fluctuation process} $\eta^N = (\eta^N_t)_{t \ge 0}$ by
\[
  \eta_t^N = \sqrt{N}\,(\mu_t^N-\bar\mu_t), \qquad t \ge 0,
\]
where $\bar\mu \in \H^{-L,\gamma}(\R^d)$ is the limit trajectory of $(\mu^N)_{N \ge 1}$ given by Theorem~\ref{thm:LLN}. Since $L > d/2$ (see~\eqref{eq:exponents}), the empirical measure process $\mu^N$ takes values in $\D\big(\R_+,\H^{-L,\gamma}(\R^d)\big)$, and hence the fluctuation process $\eta^N$ is a random element of $\D\big(\R_+,\H^{-L,\gamma}(\R^d)\big)$. 
Moreover, the embedding $\H^{J_0-1,j_0}(\R^d)\hookrightarrow \H^{L,\gamma}(\R^d)$ from Proposition~\ref{prop:embedding} also allows us to view $\eta^N$ as a random element of $\D\big(\R_+,\H^{-J_0+1,j_0}(\R^d)\big)$.

The second main result of~\cite{descours-guillin-michel-nectoux-2024} establishes a limit theorem for the fluctuation process $\eta^N$ as $N \to \infty$. The limiting fluctuation process $\eta^*$ is characterized there as the unique \emph{weak} solution of the stochastic evolution equation~\eqref{eq:CLT} driven by a so-called \emph{G-process} (see Definition~\ref{def:G-process} below). In this case, uniqueness holds \emph{in distribution}, leading to the notion of a \emph{weak} solution, recalled in Definition~\ref{def:weak}. 

\begin{definition}[G-process]\label{def:G-process}
We say that a process $\G \in \C\big(\R_+,\H^{-J_0,j_0}(\R^d)\big)$ is a \emph{G-process} if, for every $n \ge 1$ and every $\varphi_1,\dots,\varphi_n \in \H^{J_0,j_0}(\R^d)$, the finite-dimensional process \[
  \bigl(\langle \varphi_1,\G\rangle,\dots,\langle \varphi_n,\G\rangle\bigr)\in \C(\R_+,\R^n)
\]
has zero mean, independent Gaussian increments (w.r.t.\ its natural filtration), and covariance structure given by
\begin{equation}\label{eq:cov-G}
  \begin{aligned}
    &\forall\,i,j \in \{1,\dots,n\},\ \forall\,0 \le s \le t,\\
    &\qquad
    \Cov\bigl(\langle \varphi_i,\G_t\rangle,\langle \varphi_j,\G_s\rangle\bigr)
    =
    \alpha^2 \int_0^s
    \Cov_\pi\bigl(Q_r[\varphi_i],Q_r[\varphi_j]\bigr)\,dr,
  \end{aligned}
\end{equation}
where, for $\varphi \in \H^{J_0,j_0}(\R^d)$ and $r \ge 0$, 
\[
  Q_r[\varphi] = 
  \bigl(y-\langle \sigma_*(x, \cdot),\bar\mu_r\rangle\bigr)
  \big\langle \nabla \varphi \cdot \nabla \sigma_*(x, \cdot),\bar\mu_r\big\rangle.
\]
Finally, given a filtration $\mathcal{F}$, we say that $\G$ is an \emph{$\mathcal{F}$-G-process} if, in addition, its  increments are independent of $\mathcal{F}$ (i.e., for all $0\le s\le t$, $\G_t-\G_s$ is independent of $\mathcal{F}_s$). 
\end{definition}

\begin{remark}
Note that $Q_r[\varphi]$ is well defined for every $\varphi \in \H^{J_0,j_0}(\R^d)$ since for each $i \in \{1,\dots,d\}$, $\partial_i \varphi \in \H^{J_0-1,j_0}(\R^d) \hookrightarrow \H^{L,\gamma}(\R^d)$ by Proposition~\ref{prop:embedding}.
\end{remark}

\begin{remark}
Recall from~\cite[Proposition~34]{descours-guillin-michel-nectoux-2024} that the law of a $\C\big(\R_+,\H^{-J_0,j_0}(\R^d)\big)$-valued process $\nu$ is fully determined by the laws of all the finite-dimensional processes $\bigl(\langle \varphi_1,\nu\rangle,\dots,\langle \varphi_n,\nu\rangle\bigr)$, $\varphi_1,\dots,\varphi_n \in \H^{J_0,j_0}(\R^d), \ n\ge 1$. Therefore, Definition~\ref{def:G-process} uniquely specifies the law of $\G$. 
\end{remark}

\begin{remark}\label{rmk:F-martingale}
Note that for every G-process $\G$, for every $n \ge 1$ and $\varphi_1,\dots,\varphi_n \in \H^{J_0,j_0}(\R^d)$, the process $\big(\langle \varphi_1,\G\rangle,\dots,\langle \varphi_n,\G \rangle\big)$ is a martingale with respect to its natural filtration. 
If, in addition, $\G$ is an $\mathcal{F}$-G-process, then $\big(\langle \varphi_1,\G\rangle,\dots,\langle \varphi_n,\G\rangle\big)$ is also an $\mathcal{F}$-martingale whenever it is adapted to $\mathcal F$.
\end{remark}

Let $\eta$ be a $\C\big(\R_+,\H^{-J_0+1,j_0}(\R^d)\big)$-valued process and let $\G \in \C\big(\R_+,\H^{-J_0,j_0}(\R^d)\big)$ be a G-process. We consider the stochastic evolution equation:
\begin{equation}\label{eq:CLT}
  \begin{aligned}
    &\as\quad \forall \varphi \in \H^{J_0,j_0}(\R^d),\ \forall t \ge 0,\\
    &\qquad
    \langle \varphi,\eta_t\rangle
    =
    \langle \varphi,\eta_0\rangle + \langle \varphi, \G_t\rangle
    +
    \alpha \int_0^t \E_\pi\Big[
      \bigl(y-\langle \sigma_*(x, \cdot),\bar\mu_s\rangle\bigr)
      \big\langle \nabla \varphi \cdot \nabla \sigma_*(x, \cdot),\eta_s\big\rangle \\
      &\hspace*{7.5cm} 
      -\langle \sigma_*(x, \cdot),\eta_s\rangle
      \big\langle \nabla \varphi \cdot \nabla \sigma_*(x, \cdot),\bar\mu_s\big\rangle 
    \Big]    
    \,ds.\\
  \end{aligned}
\end{equation}

For a process $X$, we denote by $\mathcal{F}^X$ its natural filtration. 

\begin{definition}[Weak solution]\label{def:weak}
Let $\nu$ be an $\H^{-J_0+1,j_0}(\R^d)$-valued random variable. We say that a process $\eta \in \C\big(\R_+,\H^{-J_0+1,j_0}(\R^d)\big)$ is a \emph{weak solution} of~\eqref{eq:CLT} with initial distribution $\nu$ if there exists an $\mathcal F^{\eta}$-G-process $\G \in \C\big(\R_+,\H^{-J_0,j_0}(\R^d)\big)$ such that~\eqref{eq:CLT} holds and $\eta_0 = \nu$ in distribution. In addition, we say that \emph{weak uniqueness} holds if any two weak solutions of~\eqref{eq:CLT} (possibly defined on different probability spaces) with the same initial distribution have the same law.
\end{definition}

\begin{remark}\label{rmk:independence}
Let us record two consequences of the definition. First, if $\G$ is an $\mathcal F^\eta$-G-process, then $\G$ is independent of the initial condition $\eta_0$. Second, if $\eta$ is a weak solution of~\eqref{eq:CLT}, then, for every $\varphi\in\H^{J_0,j_0}(\R^d)$, the real-valued process $\langle\varphi,\G\rangle$ is $\mathcal F^\eta$-adapted and hence an $\mathcal F^\eta$-martingale, by Remark~\ref{rmk:F-martingale}. Since $\H^{-J_0,j_0}(\R^d)$ is separable, this also implies that $\G_t$ is $\mathcal F_t^\eta$-measurable for every $t\ge0$.
\end{remark}

We can now state the central limit theorem for the fluctuation process.

\begin{theorem}[\cite{descours-guillin-michel-nectoux-2024}]\label{thm:CLT}
Assume~\ref{assump:A1}--\ref{assump:A4} and $\beta > 3/4$. Then the sequence $(\eta^N)_{N \ge 1}$ converges in distribution in $\D\big(\R_+,\H^{-J_0+1,j_0}(\R^d)\big)$ to a process $\eta^* \in \C\big(\R_+,\H^{-J_0+1,j_0}(\R^d)\big)$ characterized as the unique weak solution of~\eqref{eq:CLT} with initial distribution $\nu_0$ such that, for every $n \ge 1$ and every $\varphi_1,\dots,\varphi_n \in \H^{J_0-1,j_0}(\R^d)$,
\begin{equation}\label{eq:initial}
  \bigl(\langle \varphi_1,\nu_0\rangle,\dots,\langle \varphi_n,\nu_0\rangle\bigr)
  \sim
  \mathcal N\bigl(0,\Gamma(\varphi_1,\dots,\varphi_n)\bigr),
\end{equation}
where $\Gamma(\varphi_1,\dots,\varphi_n)$ is the covariance matrix of $\bigl(\varphi_1(\theta_0^1),\dots,\varphi_n(\theta_0^1)\bigr)$.
\end{theorem}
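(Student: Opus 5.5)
The plan follows the classical scheme for fluctuation limits of mean-field particle systems, in the Hilbertian framework of~\cite{fernandez-meleard-1997}: a semimartingale decomposition of $\eta^N$, tightness, identification of every limit point, and weak uniqueness for~\eqref{eq:CLT}.

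\emph{Step 1 (discrete decomposition).} Fix $\varphi\in\H^{J_0,j_0}(\R^d)$. I would Taylor-expand $\varphi(\theta^i_{k+1})-\varphi(\theta^i_k)$ to second order using~\eqref{eq:SGD}. Summing over $i$ and over $k<\lfloor Nt\rfloor$, and subtracting~\eqref{eq:LLN}, should give
\[
  \langle\varphi,\eta^N_t\rangle=\langle\varphi,\eta^N_0\rangle+M^N_t[\varphi]+\alpha\int_0^t \E_\pi\bigl[\cdots\bigr]\,ds+R^N_t[\varphi].
\]
Here the integrand is the linearized drift of~\eqref{eq:CLT}. It comes from writing the product $(y-\langle\sigma_*,\mu^N_s\rangle)\langle\nabla\varphi\cdot\nabla\sigma_*,\mu^N_s\rangle$ minus its mean-field counterpart, with one factor at $\mu^N$ and one at $\bar\mu$, plus a quadratic term of order $N^{-1/2}|\eta^N|^2$.

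The term $M^N_t[\varphi]$ is the $\sqrt N$-rescaled martingale obtained by centering the increment with respect to the fresh data point $(x_k,y_k)$; it is a martingale for~\eqref{eq:filtration} by~\ref{assump:A2}. The Gaussian noise $\varepsilon^i_k$ contributes two further pieces. The first is a martingale of variance of order $N\cdot N\cdot N^{-2\beta}/N$, hence $N^{1-2\beta}\to0$. The second is a second-order drift of order $\sqrt N\cdot N\cdot N^{-2\beta}$, which vanishes exactly because $\beta>3/4$. All of these, together with the Taylor remainders and the time-discretization errors, go into $R^N$. I would bound $R^N$ in $L^2$ uniformly on compacts, using the moment bounds on $\mu^N$ in $\mathcal P_\gamma$ behind Theorem~\ref{thm:LLN} and the moment assumption in~\ref{assump:A2}.

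\emph{Step 2 (tightness).} I would prove $\sup_N\E\bigl[\sup_{t\le T}\|\eta^N_t\|^2_{\H^{-J_0+2,j_0}}\bigr]<\infty$. The method is to sum the decomposition over an orthonormal basis of $\H^{J_0-2,j_0}$, or of a space embedded in it in Hilbert--Schmidt fashion (Proposition~\ref{prop:embedding}), and then control the drift by Gronwall. Because the embedding $\H^{-J_0+2,j_0}\hookrightarrow\H^{-J_0+1,j_0}$ is compact, I would combine this with the Aldous criterion on the increments to get tightness in $\D(\R_+,\H^{-J_0+1,j_0})$. The jumps of $\eta^N$ are $O(N^{-1/2})$, so every limit point is continuous.

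\emph{Step 3 (identification).} The initial term $\eta^N_0=\sqrt N(\mu^N_0-\mu_0)$ converges to $\nu_0$ by the multivariate CLT for i.i.d.\ vectors, which gives~\eqref{eq:initial}. For $M^N$, I would use a martingale CLT. Its jumps vanish, and its bracket converges in probability to $\alpha^2\int_0^t\Cov_\pi(Q_r[\varphi_i],Q_r[\varphi_j])\,dr$ by Theorem~\ref{thm:LLN}. The limit is therefore a G-process with covariance~\eqref{eq:cov-G}. Its increments are independent of the past of the limit of $(\eta^N,M^N)$, because the data points are fresh at each step. Passing to the limit along a converging subsequence of the pair $(\eta^N,M^N)$ then shows that any limit point is a weak solution of~\eqref{eq:CLT} in the sense of Definition~\ref{def:weak}.

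\emph{Step 4 (weak uniqueness), the main obstacle.} Since~\eqref{eq:CLT} is linear with additive noise, two solutions driven by the same $\G$ and the same $\eta_0$ differ by a process $\xi$ that solves the deterministic linear equation with $\xi_0=0$. It then suffices to show pathwise uniqueness, i.e.\ $\xi\equiv0$, and to transfer it to weak uniqueness by a Yamada--Watanabe type argument. Alternatively, one can construct the solution explicitly and show that its law depends only on the laws of $\eta_0$ and $\G$.

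The difficulty is the loss of one derivative in $\langle\nabla\varphi\cdot\nabla\sigma_*,\xi_s\rangle$, which prevents a naive Gronwall estimate in a fixed dual norm. What I would try first is an energy estimate in $\H^{-J_0+1,j_0}$ via the Riesz representation. The key point is that the dangerous term is a transport term. After integrating by parts against the weight, its top-order contribution involves only the divergence of the smooth vector field $\E_\pi\bigl[(y-\langle\sigma_*,\bar\mu_s\rangle)\nabla\sigma_*\bigr]$, so the derivative loss cancels; this uses~\ref{assump:A1}. The nonlocal term is of rank one and is bounded directly. If this route fails, the fallback is duality: test $\xi$ against solutions of the adjoint backward transport equation with a nonlocal source, for which existence and regularity must then be established separately.
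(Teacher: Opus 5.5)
Your overall architecture (decomposition, tightness of the pair $(\eta^N,\sqrt N M^N)$, identification of limit points, pathwise uniqueness transferred to uniqueness in law) is the one the paper relies on. However, the step carrying the real difficulty is asserted, not proved. In Step~3 you claim that the increments of the limiting G-process are independent of the past of the limit ``because the data points are fresh at each step''.

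Freshness of $(x_k,y_k)$ only makes $\sqrt N M^N$ a martingale for the filtration~\eqref{eq:filtration}. Its increments are \emph{not} independent of that filtration at finite $N$. The conditional law of $\langle\varphi,M^N_k\rangle$ given $\F^N_k$ depends on $\mu^N_{k/N}$, hence on $\theta_0$ and on $\eta^N_0$; already the conditional variance of the first increment is a function of $\mu^N_0$. Independence only appears in the limit, because the bracket becomes deterministic, and it is not inherited through weak convergence for free.

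The martingale CLT applied to $M^N$ alone does not supply it either. It gives a Gaussian limit with independent increments relative to its \emph{own} filtration, not relative to $\sigma(\eta_r,\G_r,\ r\le s)$. The latter is what Definition~\ref{def:weak} and the Yamada--Watanabe transfer in your Step~4 (\cite[Theorem~32.14]{kallenberg-2002}) require. This is exactly the point the paper flags as missing from the original proof in~\cite{descours-guillin-michel-nectoux-2024}, and it is the content of Proposition~\ref{prop:independence}.

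The missing idea is a martingale-problem argument for the \emph{joint} filtration. For $\psi\in\C_c^\infty(\R)$ and $u\in\R^n$, one shows that $\psi(u\cdot X_t)-\frac12\int_0^t\psi''(u\cdot X_r)\,dc_u(r)$ is a martingale with respect to the filtration generated by $(\eta,X)$, where $c_u$ is the deterministic limiting bracket. The steps are:
\begin{enumerate}
\item test the finite-$N$ analogue against $h(\eta^N|_{[0,s]})\,g(J^N|_{[0,s]})$;
\item bound the second-order Taylor remainder at each jump by third moments of the jumps, for a total contribution $O(N^{-1/2})$;
\item replace $c_u^N$ by $c_u$, using convergence in probability of the brackets and vanishing jumps;
\item pass to the limit through an $L^{3/2}$ bound on $c_u^N(t)-c_u^N(s)$, which gives uniform integrability.
\end{enumerate}
Then \cite[Theorem~7.1.2]{ethier-kurtz-2009} yields $\E[e^{iu\cdot(X_t-X_s)}\mid\mathfrak F_s]=e^{-\frac12u\cdot(C(t)-C(s))u}$. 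This is independence from the joint past, and in particular from $\eta_0$.

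Without this step, neither of your uniqueness routes closes. The pathwise-to-law transfer needs it as a hypothesis. The ``explicit construction'' route, made concrete by the paper's representation $\langle\varphi,\eta_t\rangle=\langle f^\varphi(0),\eta_0\rangle+Z_t^\varphi$, only shows that the law of $\eta$ is determined by the \emph{joint} law of $(\eta_0,\G)$.

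A smaller slip in Step~2: $\H^{-J_0+2,j_0}\hookrightarrow\H^{-J_0+1,j_0}$ is not compact on $\R^d$ when the weight is unchanged. Suitably normalized translates of a fixed bump give a counterexample on the primal side. You need a bound in a dual space with fewer derivatives \emph{and} a larger weight, e.g.\ $\H^{-(J_0-1-K),j_0+k}$ with $K,k>d/2$. Its embedding into $\H^{-J_0+1,j_0}$ is dual to the embedding $\H^{J_0-1,j_0}\hookrightarrow\H^{J_0-1-K,j_0+k}$ of Proposition~\ref{prop:embedding}, which is Hilbert--Schmidt by~\cite{fernandez-meleard-1997}.
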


Let us mention that the independence between the initial condition $\eta^*_0$ and the associated G-process $\G^*$ driving equation~\eqref{eq:CLT} from Definition~\ref{def:weak} is essential (and far from trivial) for invoking uniqueness in law for equation~\eqref{eq:CLT} based on the strong uniqueness established in~\cite[Section~3.5.2]{descours-guillin-michel-nectoux-2024} (see indeed~\cite[Theorem~32.14]{kallenberg-2002}). We note that this property was not accounted for in the proof of Theorem 2 in~\cite{descours-guillin-michel-nectoux-2024}; this gap will be addressed in Section~\ref{sec:proof}, as it is also fundamental for determining the covariance formula~\eqref{eq:covariance} of the limiting fluctuation process.

\section{Main Result}
\label{sec:main}

The main result of this work (Theorem~\ref{thm:main} below) gives an explicit characterization of the evolution of the law of the limiting fluctuation process $\eta^*$ from Theorem~\ref{thm:CLT}. Its covariance structure is described via the solution of a backward transport equation. More precisely, given $t\ge0$ and $\varphi \in \C_b^\infty(\R^d)$, we consider the backward problem on $[0,t]\times \R^d$:
\begin{equation}\label{eq:T}\tag{T}
  \begin{aligned}
    &\partial_s f(s,w)
    + \alpha\,\E_\pi\Big[
      \big(y-\langle \sigma_*(x,\cdot),\bar\mu_s\rangle\big)
      \nabla f(s,w)\cdot \nabla \sigma_*(x,w) \\
      &\hspace*{3.85cm}-
      \big\langle \nabla f(s) \cdot \nabla \sigma_*(x,\cdot),\bar\mu_s\big\rangle\,
      \sigma_*(x,w)
    \Big]
    =0,\\
    &f(t,w) = \varphi(w).
  \end{aligned}
\end{equation}
Here and throughout, for a function $f:(s,w)\in[0,t]\times\R^d\mapsto f(s,w)$, we write $f(s)$ (or $f(s,\cdot)$) for the function $w\mapsto f(s,w)$. By Theorem~\ref{thm:pde} in Section~\ref{sec:pde}, for every final time $t\ge0$ and terminal datum $\varphi\in \C_b^\infty(\R^d)$, equation~\eqref{eq:T} admits a unique solution denoted by $f^\varphi$ in $\C^1\big([0,t],\H^{J_0,j_0}(\R^d)\big)\cap\C^1\big([0,t]\times\R^d\big)$.

We are now in position to state our main result.

\begin{theorem}\label{thm:main}
Assume~\ref{assump:A1}--\ref{assump:A4} and $\beta>3/4$.
Then for every $n \ge 1$ and every $\varphi_1,\dots,\varphi_n\in \C_b^\infty(\R^d)$, the finite-dimensional process 
\[
  \big(\langle\varphi_1, \eta^*\rangle, \dots, \langle\varphi_n, \eta^*\rangle\big) \in \C(\R_+,\R^n)
\]
is a centered Gaussian process. Moreover, for any $\varphi,\psi\in \C_b^\infty(\R^d)$ and any $0 \le s \le t$,
\begin{equation}\label{eq:covariance}
  \begin{aligned}
      \Cov\big(\langle \varphi,\eta^*_s\rangle,\langle \psi,\eta^*_t\rangle\big)
      ={}
      &\Cov\big(f^\varphi(0,\theta_0^1),f^\psi(0,\theta_0^1)\big)\\
      &\qquad
      +\alpha^2\int_0^s
      \Cov_\pi\big(Q_r[f^\varphi(r)],Q_r[f^\psi(r)]\big)\,dr,
  \end{aligned}
\end{equation}
where $f^\varphi$ and $f^\psi$ denote respectively the solutions of~\eqref{eq:T} on $[0,s]$ and $[0,t]$ with terminal data $f^\varphi(s,\cdot)=\varphi$ and $f^\psi(t,\cdot)=\psi$.
\end{theorem}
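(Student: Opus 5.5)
The plan is a duality argument. Fix $t\ge0$ and $\varphi\in\C_b^\infty(\R^d)$, and let $f=f^\varphi$ be the solution of~\eqref{eq:T} on $[0,t]$ given by Theorem~\ref{thm:pde}. We compute the time evolution of $s\mapsto\langle f(s),\eta^*_s\rangle$ for $s\in[0,t]$ and aim for the representation
\[
  \langle\varphi,\eta^*_t\rangle=\langle f(0),\eta^*_0\rangle+\int_0^t\langle f(r),d\G^*_r\rangle,
\]
where $\G^*$ is the $\mathcal F^{\eta^*}$-G-process associated with $\eta^*$ (Definition~\ref{def:weak}).

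To obtain it, write~\eqref{eq:CLT} in differential form as $d\langle g,\eta_s\rangle=\langle g,d\G_s\rangle+\langle \mathcal A_s g,\eta_s\rangle ds$. Here $\mathcal A_s g(w)$ is the deterministic drift, namely the $\E_\pi$-term in~\eqref{eq:T} with $f$ replaced by $g$. Note that $\langle \mathcal A_s g,\eta\rangle$ equals the integrand in~\eqref{eq:CLT}, after Fubini exchanges $\E_\pi$ with the pairing; this is legitimate since $\sigma_*(x,\cdot)$ and its derivatives are $\C_b^\infty$ uniformly in $x$. Equation~\eqref{eq:T} then reads $\partial_s f+\mathcal A_s f=0$.

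To handle a time-dependent test function, take a partition $0=s_0<\dots<s_m=t$ and telescope:
\[
  \langle f(s_{k+1}),\eta_{s_{k+1}}\rangle-\langle f(s_k),\eta_{s_k}\rangle
  =\langle f(s_{k+1})-f(s_k),\eta_{s_{k+1}}\rangle+\big(\langle f(s_k),\eta_{s_{k+1}}\rangle-\langle f(s_k),\eta_{s_k}\rangle\big).
\]
\begin{itemize}
  \item The first piece uses $f\in\C^1([0,t],\H^{J_0,j_0})$ and the continuity of $\eta$ in $\H^{-J_0+1,j_0}\subset\H^{-J_0,j_0}$, and gives $\int\langle\partial_sf,\eta_s\rangle ds$ in the limit.
  \item The second piece, by~\eqref{eq:CLT} with fixed test function $f(s_k)$, gives $\langle f(s_k),\G_{s_{k+1}}-\G_{s_k}\rangle+\int_{s_k}^{s_{k+1}}\langle\mathcal A_rf(s_k),\eta_r\rangle dr$.
\end{itemize}
As the mesh goes to zero, the drift sums converge to $\int\langle\mathcal A_rf(r),\eta_r\rangle dr$. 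This needs continuity of $\mathcal A_r$ from $\H^{J_0,j_0}$ into $\H^{J_0-1,j_0}$, via Proposition~\ref{prop:embedding} and the regularity of $\bar\mu$ from Theorem~\ref{thm:LLN}. The drift terms then cancel against $\int\langle\partial_sf,\eta_s\rangle ds$. The noise sums converge in $L^2$ to a stochastic integral $\int_0^t\langle f(r),d\G_r\rangle$, defined as the $L^2$ limit of Riemann sums. This limit exists because each $\langle g,\G\rangle$ is a continuous Gaussian martingale with the deterministic bracket from~\eqref{eq:cov-G}. Its variance is $\alpha^2\int_0^t\Var_\pi(Q_r[f(r)])dr$, since $g\mapsto Q_r[g]$ is linear and continuous.

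Gaussianity and the covariance formula follow from this representation.
\begin{itemize}
  \item The noise integral is an $L^2$ limit of linear functionals of the Gaussian family $\{\langle g,\G_u\rangle\}$, which is jointly Gaussian by Definition~\ref{def:G-process}. Hence it is Gaussian, and jointly Gaussian across different $\varphi_i$ and times.
  \item $\langle f(0),\eta^*_0\rangle$ is Gaussian by~\eqref{eq:initial}, with variance $\Var(f(0,\theta_0^1))$.
  \item The independence of $\eta^*_0$ and $\G^*$ from Remark~\ref{rmk:independence} makes the whole vector $(\langle\varphi_i,\eta^*_{t_k}\rangle)_{i,k}$ jointly Gaussian and centered.
\end{itemize}
For the covariance with $s\le t$, the cross terms vanish by independence. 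The initial terms give $\Cov(f^\varphi(0,\theta_0^1),f^\psi(0,\theta_0^1))$. For the noise terms, write $\int_0^t=\int_0^s+\int_s^t$ for $\psi$; the part over $[s,t]$ is uncorrelated with $\mathcal F_s$-measurable quantities by the martingale property. Polarization of the isometry on $[0,s]$ then gives $\alpha^2\int_0^s\Cov_\pi(Q_r[f^\varphi(r)],Q_r[f^\psi(r)])dr$.

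The main obstacle is the independence of $\eta^*_0$ from $\G^*$. The text after Theorem~\ref{thm:CLT} flags this as a gap in the earlier proof, so it must be established rather than quoted. I would try to derive it from the prelimit. For each $N$, $\eta^N_0$ is $\F^N_0$-measurable, while the martingale part $M^N$ of the prelimit equation is built from the $(x_k,y_k,\varepsilon^i_k)$, which are independent of $\F_k^N$. Two points would then need checking:
\begin{itemize}
  \item that $E[F(\eta^N_0)\,e^{i\langle g,M^N_u-M^N_0\rangle}]-E[F(\eta^N_0)]E[e^{i\langle g,M^N_u\rangle}]\to0$, by conditioning successively on $\F_k^N$ together with a conditional CLT for the martingale increments;
  \item that this product structure passes to the joint limit $(\eta^*_0,\G^*)$.
\end{itemize}
Alternatively, one could show that $\langle g,\G^*\rangle$ is a martingale with deterministic bracket with respect to a filtration containing $\sigma(\eta^*_0)$, and then conclude by a Lévy characterization. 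A secondary technical point is justifying the Riemann-sum limits in the time-dependent Itô-type formula within the Sobolev-dual setting.
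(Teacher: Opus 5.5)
Your argument is sound and follows the same architecture as the paper's proof:
\begin{itemize}
\item duality against the solution $f^\varphi$ of \eqref{eq:T};
\item the representation of $\langle\varphi,\eta^*_t\rangle$ as $\langle f^\varphi(0),\eta^*_0\rangle$ plus a linear functional of $\G^*$;
\item Gaussianity from the closure of the Gaussian family $\{\langle g,\G^*_u\rangle\}$, together with independence from $\eta^*_0$;
\item the covariance from \eqref{eq:cov-G}.
\end{itemize}

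The differences are in execution. The paper proves the time-dependent identity (Proposition~\ref{prop:extension}) by checking it on test functions $g(r)h$ and extending by density of such sums in $\C^1([0,t],\H^{J_0,j_0}(\R^d))$. It keeps the noise in the integrated-by-parts form $\langle\varphi,\G^*_t\rangle-\int_0^t\langle\partial_s f^\varphi(s),\G^*_s\rangle\,ds$, which is your $\int_0^t\langle f(r),d\G^*_r\rangle$ since $\G^*_0=0$. It then computes the covariance by expanding four terms and applying Fubini. Your telescoping derivation and your isometry-plus-polarization computation reach the same formula more directly. In fact your noise Riemann sums converge pathwise by summation by parts, so no $L^2$ construction is needed.

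The step you leave as a plan is the independence of $\eta^*_0$ and $\G^*$. This is exactly the substantive preliminary of the paper (Proposition~\ref{prop:independence}), so as written your proposal is incomplete there, although you have the right idea.

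Your second option is the paper's route. The paper shows that $\psi(u\cdot X_t)-\frac12\int_0^t\psi''(u\cdot X_r)\,dc_u(r)$ is a martingale for the filtration generated by $(\eta,X)$. Passing to the limit uses:
\begin{itemize}
\item convergence in probability of the quadratic covariations $C^N\to C$;
\item vanishing jumps;
\item uniform integrability of $c^N_u(t)-c^N_u(s)$, via an $L^{3/2}$ bound coming from third-moment estimates on the increments of $\sqrt N M^N$;
\item a third-order Taylor bound showing the prelimit processes are approximate martingales.
\end{itemize}
It then concludes with the Ethier--Kurtz argument giving the conditional characteristic function of the increments.

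Your first option is a genuinely different route: a conditional (stable) martingale CLT with $\F^N_0$-measurable weights. To make it work, apply it to finite combinations $\sum_j\langle g_j,\sqrt N M^N_{u_j}\rangle$ rather than a single $\langle g,M^N_u\rangle$. It yields only independence of $\eta_0$ from $\G$, not the full $\F^\eta$-G-process property. That suffices for you, however. Your representation expresses the law of every limit point in terms of $\nu_0$ and the law of the G-process. So it also gives uniqueness in law directly, bypassing the strong-to-weak uniqueness argument used in Corollary~\ref{cor:independence}.
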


In particular, for every $t\ge 0$ and every
$\varphi\in \C_b^\infty(\R^d)$, it holds
\begin{equation}\label{eq:variance}
  \Var\big(\langle \varphi,\eta^*_t\rangle\big)
  =
  \Var\big(f^\varphi(0,\theta_0^1)\big)
  +\alpha^2\int_0^t
  \Var\big(Q_s[f^\varphi(s)]\big)\,ds.
\end{equation}
Taking $\varphi=\sigma_*(x,\cdot)$ in~\eqref{eq:variance} thus yields the explicit variance formula for the network output at the prediction point $x\in \X$ in the mean-field regime. We illustrate this formula numerically in Section~\ref{sec:numerics}.

\begin{remark}
Although Theorem~\ref{thm:main} is stated for test functions in $\C_b^\infty(\R^d)$, this is enough to characterize the law of the $\H^{-J_0+1,j_0}(\R^d)$-valued process $\eta^*$. Indeed, $\C_c^\infty(\R^d)\subset \C_b^\infty(\R^d)$ is dense in $\H^{J_0-1,j_0}(\R^d)$, and the duality pairing with elements of $\H^{-J_0+1,j_0}(\R^d)$ is continuous on $\H^{J_0-1,j_0}(\R^d)$. Hence the finite-dimensional laws obtained by testing against smooth test functions in $\C_b^\infty(\R^d)$ determine those obtained by testing against arbitrary functions in the separable Hilbert space $\H^{J_0-1,j_0}(\R^d)$, and therefore completely determine the law of the whole process $\eta^*$. 
\end{remark}

\begin{remark}
\end{remark}

\section{Proof of Theorem~\ref{thm:main}}
\label{sec:proof}

This section is devoted to the proof of Theorem~\ref{thm:main}. To this end, we assume that Assumptions~\ref{assump:A1}--\ref{assump:A4} hold and that $\beta>3/4$. The analysis of the backward equation~\eqref{eq:T} is deferred to Section~\ref{sec:pde}, and we shall use its well-posedness results throughout the proof.

\subsection{A preliminary result}

The first step in the proof of Theorem~\ref{thm:main} is to establish that the driving noise $\G^*$ in equation~\eqref{eq:CLT} satisfied by the limiting fluctuation process $\eta^*$ is an $\mathcal{F}^{\eta^*}$-G-process (see Definition~\ref{def:G-process}). Since the fact that $\G^*$ is a G-process has already been established in~\cite{descours-guillin-michel-nectoux-2024}, we focus here on proving that its increments are independent of the filtration $\mathcal{F}^{\eta^*}$. Let us emphasize that while~\cite{descours-guillin-michel-nectoux-2024} proved that $\G^*$ is a G-process, the stronger property of $\mathcal{F}^{\eta^*}$-independence was not explicitly addressed. Verifying this independence fills a technical gap in the proof of~\cite[Theorem~2]{descours-guillin-michel-nectoux-2024}, as it is a necessary condition to invoke uniqueness in law for the weak solution of~\eqref{eq:CLT}. It is also an essential ingredient for the derivation of the explicit covariance formula~\eqref{eq:covariance} in Theorem~\ref{thm:main}.

Following the notation of~\cite{descours-guillin-michel-nectoux-2024}, for
$\varphi \in \H^{J_0,j_0}(\R^d)$, $N \ge 1$, and $k \in \N$, we introduce
\begin{align*}
 \langle \varphi, M^N_k\rangle
 &= \frac{\alpha}{N}\big(y-\langle \sigma_*(x,\cdot),\mu^N_{k/N}\rangle\big)
    \big\langle \nabla \varphi \cdot \nabla \sigma_*(x,\cdot),\mu^N_{k/N}\big\rangle \\
 &\qquad
 - \frac{\alpha}{N}\E_\pi\Big[
   \big(y-\langle \sigma_*(x,\cdot),\mu^N_{k/N}\rangle\big)
   \big\langle \nabla \varphi \cdot \nabla \sigma_*(x,\cdot),\mu^N_{k/N}\big\rangle
   \Big],
\end{align*}
and, for all $t\ge 0$,
\[
 \langle \varphi, M^N_t\rangle
 = \sum_{k=0}^{\lfloor Nt\rfloor-1}\langle \varphi,M^N_k\rangle
\]
(with the usual convention $\sum_0^{-1}=0$). We recall from~\cite[Propositions~24 and~27]{descours-guillin-michel-nectoux-2024} that the sequence $(\eta^N,\sqrt{N} M^N)_{N\ge 1}$ is tight in the product space
\begin{equation}\label{eq:E}
  \mathcal{E}
  =
  \D\big(\R_+,\H^{-J_0+1,j_0}(\R^d)\big)
  \times
  \D\big(\R_+,\H^{-J_0,j_0}(\R^d)\big).
\end{equation}
Let $(\eta, \G)$ be one of its limit points along some subsequence, that we still denote by $N$ for ease of notation. It follows from~\cite[Propositions~32 and~36]{descours-guillin-michel-nectoux-2024} that $\G$ is a G-process (see Definition~\ref{def:G-process}) and $\eta$ solves~\eqref{eq:CLT} driven by $\G$ and with initial distribution $\nu_0$.

\begin{proposition}\label{prop:independence}
Assume~\ref{assump:A1}--\ref{assump:A4} and  $\beta>3/4$.  Then, the increments of the G-process $\G \in \C(\R_+,\H^{-J_0,j_0}(\R^d))$  are independent of $\mathcal F^\eta$. In particular, $\G$ is an $\mathcal F^\eta$-G-process and $\eta$ is a weak solution of~\eqref{eq:CLT} in the sense of Definition~\ref{def:weak}.
\end{proposition}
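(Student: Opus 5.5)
We prove the independence at the level of the prelimit and pass it to the limit. For fixed $N$, the increment $\sqrt N(M^N_t-M^N_s)$ is built from the data $(x_k,y_k)$ with $\lfloor Ns\rfloor\le k<\lfloor Nt\rfloor$, together with the parameters $\theta_k$. It is therefore not exactly independent of $\F^N_{\lfloor Ns\rfloor}$, because the integrands depend on $\mu^N_{k/N}$. We do not aim for exact independence. Instead we prove a conditional characteristic-function statement. Fix $0\le s_1\le\dots\le s_p\le s\le t$, test functions $\psi_1,\dots,\psi_p\in \H^{J_0-1,j_0}(\R^d)$, $\varphi_1,\dots,\varphi_n\in \H^{J_0,j_0}(\R^d)$, a bounded continuous $h:\R^p\to\R$ and $u\in\R^n$. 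The goal is
\[
\E\Big[h\big(\langle\psi_\ell,\eta_{s_\ell}\rangle_{\ell}\big)\,e^{i\sum_j u_j\langle\varphi_j,\G_t-\G_s\rangle}\Big]
=\E\big[h(\cdots)\big]\,\exp\Big(-\tfrac{\alpha^2}{2}\int_s^t \Cov_\pi\big(\textstyle\sum_j u_jQ_r[\varphi_j]\big)dr\Big).
\]
This identity shows that $\G_t-\G_s$ is independent of $\mathcal F^\eta_s$ and Gaussian with the stated law. Together with Remark~\ref{rmk:independence} on measurability and the G-process property from \cite{descours-guillin-michel-nectoux-2024}, it yields the proposition.

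\textbf{Step 1: a discrete Lévy-type computation.} Write $Z^N_k=\sqrt N\sum_j u_j\langle\varphi_j,M^N_k\rangle$. By \ref{assump:A2}, conditionally on $\F^N_k$ this variable is centered, bounded by $C/\sqrt N$ (using the moment assumption on $y$ and a truncation if needed), and has conditional variance $\frac{\alpha^2}{N}V^N_k$, where $V^N_k=\Cov_\pi\big(\sum_j u_j Q^{N}_{k}[\varphi_j]\big)$ and $Q^N_k$ is $Q$ with $\bar\mu_r$ replaced by $\mu^N_{k/N}$. A second-order Taylor expansion gives
\[
\E\big[e^{iZ^N_k}\mid\F^N_k\big]=\exp\Big(-\tfrac{\alpha^2}{2N}V^N_k+R^N_k\Big),\qquad |R^N_k|\lesssim N^{-3/2}(1+|y|^3\text{ moments}).
\]
Next form the product
\[
\Pi^N=\prod_{k=\lfloor Ns\rfloor}^{\lfloor Nt\rfloor-1}e^{iZ^N_k}\exp\Big(\tfrac{\alpha^2}{2N}V^N_k\Big).
\]
Up to the errors $R^N_k$, this is a product of martingale-type factors. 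Conditioning successively, we get
\[
\E\big[h(\langle\psi_\ell,\eta^N_{s_\ell}\rangle)\,\Pi^N\big]=\E[h(\cdots)]+o(1),
\]
where $h(\cdot)$ is $\F^N_{\lfloor Ns\rfloor}$-measurable.

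\textbf{Step 2: replace the random compensator by the deterministic one.} By Theorem~\ref{thm:LLN}, $\mu^N\to\bar\mu$ in probability in $\D(\R_+,\mathcal P_\gamma)$. Since the integrands are bounded by \ref{assump:A1}, it follows that $\frac1N\sum_k V^N_k\to\int_s^t\Cov_\pi(\sum_ju_jQ_r[\varphi_j])\,dr$ in probability. The exponential compensator is bounded, because $V^N_k\le C$ by boundedness of $\sigma_*$ and the moment of $y$. Hence we can replace $\exp(\frac{\alpha^2}{2N}\sum V^N_k)$ by its deterministic limit at a cost of $o(1)$. This gives the displayed identity for $(\eta^N,\sqrt N M^N)$ with an $o(1)$ error.

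\textbf{Step 3: pass to the limit.} We use the joint convergence in distribution of $(\eta^N,\sqrt NM^N)\to(\eta,\G)$ in $\mathcal E$. Since the limit is continuous, the evaluation maps at fixed times are a.s.\ continuous at the limit point. The functional inside the expectation is bounded, so the identity passes to the limit. A monotone-class argument over $h$, the times $s_\ell$ and the test functions $\psi_\ell$ then extends it to $\mathcal F^\eta_s$. We use the separability of $\H^{J_0-1,j_0}$ and density to reduce to countably many test functions.

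\textbf{Main obstacle.} I expect the main obstacle to be the error control in Step 1. The remainder $R^N_k$ must be summable, roughly $N\cdot N^{-3/2}\to0$, which requires third conditional moments of $y$; these are available from \ref{assump:A2}. The factors $\exp(\frac{\alpha^2}{2N}V^N_k)$ must also remain uniformly bounded so that the telescoping is legitimate. If unbounded $y$ causes trouble, I would first truncate $y$ at level $N^{\delta}$ and control the truncated part using the $16(\gamma+1)$ moment.
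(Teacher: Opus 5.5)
Your route is genuinely different from the paper's. The paper uses a martingale problem. It shows that $\psi(u\cdot X_t)-\frac12\int_0^t\psi''(u\cdot X_r)\,dc_u(r)$ is a martingale for the joint filtration of $(\eta,X)$, by passing the approximate martingale property of the prelimit $J^N$ to the limit. This requires three inputs: the convergence $C^N\to C$ of the quadratic covariations and the jump estimates from \cite{descours-guillin-michel-nectoux-2024}, and an $L^{3/2}$ bound on $c_u^N(t)-c_u^N(s)$ (from moment estimates as in \cite[Lemma~9]{descours-guillin-michel-nectoux-2024}) for uniform integrability. It then recovers the conditional characteristic function by the argument of \cite[Theorem~7.1.2]{ethier-kurtz-2009}. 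You instead compute the conditional characteristic function directly at the discrete level via the exponential martingale $\prod_k e^{iZ^N_k}e^{\alpha^2V^N_k/(2N)}$. You then replace the random compensator by the deterministic one using only Theorem~\ref{thm:LLN}, and pass a bounded functional to the limit. This is more elementary: no quadratic-covariation convergence, no uniform integrability, only $\E|y|^3<\infty$. It also targets exactly independence from $\mathcal F^\eta_s$, whereas the paper gets the slightly stronger independence from the joint filtration of $(\eta,X)$. The price is that the exponential compensator must be controlled deterministically, and that is where your write-up has a gap.

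Your claim that $V^N_k\le C$ ``by boundedness of $\sigma_*$ and the moment of $y$'' is false for general $\varphi_j\in\H^{J_0,j_0}(\R^d)$.
\begin{itemize}
\item Such test functions may have unbounded gradients, e.g.\ $(1-\chi(w))|w|^{\ell}$ with $1<\ell<j_0-d/2$ and $\chi$ the cutoff of Section~\ref{sec:Sobolev}. Proposition~\ref{prop:embedding} only gives $|\nabla\varphi_j(w)|\le C(1+|w|^{j_0})$. So with $\varphi=\sum_ju_j\varphi_j$ you only get $V^N_k\le C(1+\E_\pi[y^2])\langle 1+|\cdot|^{j_0},\mu^N_{k/N}\rangle^2$, which is random.
\item The factors $\exp(\frac{\alpha^2}{2N}V^N_k)$ in $\Pi^N$ are therefore not uniformly bounded. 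This breaks the telescoping in Step~1, where you need $\sup_m|\Pi_m|$ bounded to sum the $O(N^{-3/2})$ remainders. It also breaks the dominated convergence in Step~2.
\item Since the moments of $\mu^N$ inherit only polynomial tails from $y$, $\exp(\frac{\alpha^2}{2N}\sum_kV^N_k)$ need not even be integrable.
\item Truncating $y$ at level $N^\delta$ does not cure this, because the unboundedness comes from $\langle|\cdot|^{j_0},\mu^N_{k/N}\rangle$, not only from $y$.
\end{itemize}

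The fix is a reduction you should state explicitly. First take $\varphi_1,\dots,\varphi_n\in\C_c^\infty(\R^d)$. Then $V^N_k\le \|\nabla\sigma_*\|_\infty^2\|\nabla\varphi\|_\infty^2\,\E_\pi[(|y|+\|\sigma_*\|_\infty)^2]$ deterministically. The bound $|Z^N_k|\le C/\sqrt N$, which is also false since $y$ is unbounded, becomes unnecessary: $|e^{iz}-1-iz+z^2/2|\le|z|^3/6$ and $\E|y|^3<\infty$ give $\E[e^{iZ^N_k}e^{\alpha^2V^N_k/(2N)}\mid\F^N_k]=1+O(N^{-3/2})$ deterministically. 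The LLN step also works, since the integrands are now bounded and Lipschitz uniformly in $x$. Finally, extend to $\varphi_j\in\H^{J_0,j_0}(\R^d)$ by density of $\C_c^\infty(\R^d)$. The map $\varphi\mapsto\langle\varphi,\G_t-\G_s\rangle$ is continuous pathwise, and independence from $\mathcal F^\eta_s$ is preserved under almost sure limits. With this reduction made explicit, your argument proves the proposition.
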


\begin{proof}
\emph{Step 1.} Fix $n \ge 1$ and $\varphi_1,\ldots,\varphi_n \in \H^{J_0,j_0}(\R^d)$. For $t\ge 0$, set $\mathfrak{F}_t^N = \F^N_{\lfloor Nt\rfloor}$ (see~\eqref{eq:filtration}) and define for $N \ge 1$, 
\begin{equation}\label{eq:JN}
  J^N
  =
  \big(
  \langle \varphi_1,\sqrt{N}M_t^N\rangle,\ldots,
  \langle \varphi_n,\sqrt{N}M_t^N\rangle
  \big)_{t \ge 0}
  \in \D(\R_+,\R^n),
\end{equation}
which is an $\mathfrak{F}_t^N$-martingale by~\cite[Lemma~21]{descours-guillin-michel-nectoux-2024}.
Set also
\[
  X
  =
  \big(
  \langle \varphi_1,\G_t\rangle,\ldots,
  \langle \varphi_n,\G_t\rangle
  \big)_{t \ge 0}
  \in \C(\R_+,\R^n).
\]
In view of the proof of~\cite[Proposition~32]{descours-guillin-michel-nectoux-2024}, to show Proposition~\ref{prop:independence}, it is enough to prove that
the increments of the process $X$ are independent of $\mathcal F^\eta$. 

\medskip
\noindent
\emph{Step 2.}
Fix $\psi\in \C_c^\infty(\R)$ and $u\in\R^n$, and define for $t \ge 0$
\[
  Y_t^u
  =
  \psi(u\cdot X_t)
  -\frac12\int_0^t \psi''(u\cdot X_s)\,dc_u(s),
\]
where $c_u(t)= u\cdot C(t)\,u$ and $C(t) \in \R^{n \times n}$ is the square matrix defined by
\[
  C_{ij}(t)
  =
  \alpha^2\int_0^t
  \Cov_\pi\big(Q_s[\varphi_i],Q_s[\varphi_j]\big)\,ds,
  \qquad 1 \le i,j \le n.
\]
We claim that the following holds: for any fixed $0\le s\le t$, 
$h\in \C_b\big(\D\big([0,s],\H^{-J_0+1,j_0}(\R^d)\big), \R\big)$ and 
$g\in \C_b\big(\D([0,s],\R^n), \R\big)$,
\begin{equation}\label{eq:conditional}
  \E\Big[
    h(\eta\restriction{[0,s]})g\big(X\restriction{[0,s]}\big)
    \big(Y_t^u-Y_s^u\big)
  \Big]
  =0,
\end{equation}
where
\[
  X\restriction{[0,s]} = (X_r)_{0 \le r\le s} \in \D\big([0,s], \R^n)
\]
denotes the restriction of the process $X$ to the time interval $[0,s]$ (and similarly for $\eta\restriction{[0,s]}$). Then the process $Y^u$ is a martingale with respect to the joint filtration $\mathfrak{F}$ generated by $(\eta, X)$
(i.e., $\mathfrak{F}_s = \sigma(\eta_r,\,X_r,\ 0\le r\le s)$).
Following the proof of~\cite[Theorem 7.1.2]{ethier-kurtz-2009}, it follows that for all $0\le s\le t$,
\begin{equation}\label{eq:increments}
  \E\big[e^{iu\cdot(X_t-X_s)}\mid \mathfrak{F}_s\big]
  =
  e^{
  -\frac12u\cdot(C(t)-C(s))\,u
  }.
\end{equation}
In turn, equation~\eqref{eq:increments} classically implies that $X_t-X_s$ is independent of $\mathfrak{F}_s$. Since $\F_s^\eta \subset \mathfrak{F}_s$, $X_t-X_s$ is in particular independent of $\F_s^\eta$.  This would conclude the proof of Proposition~\ref{prop:independence}.


\medskip
\noindent
\emph{Step 3.} It remains to show that condition~\eqref{eq:conditional} holds. Fix $0\le s\le t$, and let
$h\in \C_b\big(\D\big([0,s],\H^{-J_0+1,j_0}(\R^d)\big), \R\big)$ and
$g\in \C_b\big(\D([0,s],\R^n), \R\big)$.
For $N\ge 1$ and $t \ge 0$, define
\[
  Y_t^{u,N}
  =
  \psi(u\cdot J_t^N)
  -\frac12\int_0^t \psi''\big(u\cdot J_{r-}^N\big)\,dc_u^N(r),
\]
where $c_u^N(t)=u\cdot C^N(t)\,u$ and $C^N(t) \in \R^{n \times n}$ is the square matrix defined by
\[
  C_{ij}^N(t)
  =
  \big[\langle \varphi_i,\sqrt{N} M^N\rangle,\langle \varphi_j,\sqrt{N} M^N\rangle\big]_t,
  \qquad 1 \le i,j \le n,
\]
i.e., the quadratic covariation between $\langle \varphi_i,\sqrt{N} M_t^N\rangle$ and $\langle \varphi_j,\sqrt{N} M_t^N\rangle$. We will prove that
\begin{equation}\label{eq:Vitali}
  \lim_{N\to\infty}\E[h(\eta^N\restriction{[0,s]})\,g\big(J^N\restriction{[0,s]}\big) (Y_t^{u, N} - Y_s^{u,N})]
  =
  \E\Big[h(\eta\restriction{[0,s]})g\big(X\restriction{[0,s]}\big)\big(Y_t^u-Y_s^u\big)\Big]
\end{equation}
and then show that the left-hand side converges to $0$. 
Set
\[
  \begin{aligned}
    Z_u^N &= \psi(u\cdot J_t^N)-\psi(u\cdot J_s^N) -\frac12 \int_s^t \psi''(u\cdot J_r^N)\,dc_u(r),\\
    R_u^N &=
    \frac12\int_s^t \psi''(u\cdot J_{r-}^N)\,dc_u^N(r)
    -
   \frac12\int_s^t \psi''(u\cdot J_r^N)\,dc_u(r),
  \end{aligned}
\]
so that $Y_t^{u,N} - Y_s^{u,N} = Z_u^N - R_u^N$.
Since $(\eta^N,\sqrt{N}M^N)$ converges in distribution to $(\eta,\G)$ in $\mathcal{E}$ (see~\eqref{eq:E}), $\eta$ and $\G$ have continuous sample paths, and $c_u$ is deterministic and continuous, it follows by the continuous mapping theorem that 
\[
  \big(
    \eta^N|_{[0,s]},\,
    J^N|_{[0,t]},\,
    Z_u^N
  \big)
  \stackrel{d}{\longrightarrow}
  \big(
       \eta|_{[0,s]},\,
    X|_{[0,t]},\,
    Y_t^u - Y_s^u
  \big).
\]
Moreover, using the convergence of the predictable covariations established in
the proof of \cite[Proposition~31]{descours-guillin-michel-nectoux-2024}, namely for all $1 \le i, j \le n$ and $0 \le r \le t$,
\[
  C_{ij}^N(r) \stackrel{\P}{\longrightarrow} C_{ij}(r),
\]
together with the jump estimates from~\cite[(118)]{descours-guillin-michel-nectoux-2024}
\[
  \lim_{N \to\infty} \E\big[\sup_{0 \le r \le t} \Delta J_r^N \big] = 0,
\]
where $\Delta J_r^N = |J_r^N - J_{r-}^N|$, we obtain (see Equation~(1.39) in the proof of~\cite[Theorem~7.1.4 (a)]{ethier-kurtz-2009} for details) that
\[
  \lim_{N \to \infty} 
  \E\Big[\Big|
  \int_s^t \psi''(u\cdot J_{r-}^N)\,dc_u^N(r)
  -
  \int_s^t \psi''(u\cdot J_r^N)\,dc_u(r)
  \Big|\Big] = 0.
\]
Thus $R_u^N \to 0$ in $L^1$, and hence also in probability. By Slutsky's theorem, 
\[
  \big(
      \eta^N|_{[0,s]},\,
    J^N|_{[0,t]},\,
    Z_u^N, R_u^N
  \big)
  \stackrel{d}{\longrightarrow}
  \big(
        \eta|_{[0,s]},\,
    X|_{[0,t]},\,
    Y_t^u - Y_s^u, 0
  \big).
\]
Since $Y_t^{u,N} - Y_s^{u,N} = Z_u^N - R_u^N$, it follows by another application of the continuous mapping theorem
that
\[
  h(    \eta^N|_{[0,s]})g\big(J^N|_{[0,s]}\big)
  \big(Y_t^{u,N}-Y_s^{u,N}\big)
  \stackrel{d}{\longrightarrow}
  h(    \eta|_{[0,s]})g\big(X|_{[0,s]}\big)
  \big(Y_t^u-Y_s^u\big).
\]

To pass to expectations, it suffices to show that the family 
\[
  \Big(h(\eta^N|_{[0,s]})g\big(J^N\restriction{[0,s]}\big)\big(Y_t^{u,N} - Y_s^{u,N}\big)\Big)_{N \ge 1}
\]
is uniformly integrable. Since the functions $h$, $g$ and $\psi$ are bounded, and 
\[
  |Y_t^{u,N} - Y_s^{u,N}| \le 2 \|\psi\|_\infty + \frac12 \|\psi''\|_\infty \big(c_u^N(t) - c_u^N(s)\big),
\]
this amounts to proving uniform integrability of the family $\big(c_u^N(t) - c_u^N(s)\big)_{N \ge 1}$. 

Note that $c_u^N$ is precisely the quadratic covariation of the real-valued pure jump process $u \cdot J^N$, with jumps occurring exactly at times $1/N, \dots, \lfloor Nt \rfloor / N$ (see~\eqref{eq:empirical-measure} and~\eqref{eq:JN}). 
Thus, we have (see, e.g.~\cite[Remark~7.1.5]{ethier-kurtz-2009}) 
\[
  c_u^N(t)-c_u^N(s)
  =
  \sum_{k=\lfloor Ns \rfloor+1}^{\lfloor Nt \rfloor} |u\cdot \Delta J_{k/N}^N|^2.
\]
By definition of $J^N$~\eqref{eq:JN}, for every $k\in\{1,\dots,\lfloor Nt\rfloor\}$,
\[
  \Delta J_{k/N}^N
  =
  \big(
  \langle \varphi_1,\sqrt{N}M_{k-1}^N\rangle,\dots,\langle \varphi_n,\sqrt{N}M_{k-1}^N\rangle
  \big).
\]
Hence, by the Cauchy--Schwarz inequality and convexity of $x \mapsto x^{3/2}$, we have
\[
  |u\cdot \Delta J_{k/N}^N|^3
  \le
  |u|^3\,|\Delta J_{k/N}^N|^3
  \le
  \sqrt{n}\,|u|^3
  \sum_{i=1}^n |\langle \varphi_i,\sqrt{N}M_{k-1}^N\rangle|^3.
\]
The same computations as in the proof of~\cite[Lemma~9~(iii)]{descours-guillin-michel-nectoux-2024} show that, for each $i\in\{1,\dots,n\}$,
\begin{equation}\label{eq:cube}
  \max_{1 \le k\le \lfloor Nt \rfloor} N^{3/2}\,\E\big[|\langle \varphi_i,\sqrt{N}M_{k-1}^N\rangle|^3\big] < \infty.
\end{equation}
Thus, 
\begin{align*}
  \E\big[|c_u^N(t)-c_u^N(s)|^{3/2}\big]
  &\le
  \big(\lfloor Nt\rfloor-\lfloor Ns\rfloor\big)^{1/2}
  \sum_{k=\lfloor Ns\rfloor+1}^{\lfloor Nt\rfloor}
  \E\big[|u\cdot \Delta J_{k/N}^N|^3\big] \\
  &\le
  \sqrt{n} \,t^{3/2}\,|u|^3 \sum_{i=1}^n \max_{1\le k\le \lfloor Nt \rfloor} N^{3/2}\,\E\big[|\langle \varphi_i,\sqrt{N}M_{k-1}^N\rangle|^3\big] < \infty,
\end{align*}
uniformly in $N \ge 1$. Hence, the family $\big(c_u^N(t)-c_u^N(s)\big)_{N \ge 1}$ is bounded in $L^{3/2}$, and in particular, uniformly integrable. This proves~\eqref{eq:Vitali}.

On the other hand, since $\eta^N|_{[0,s]}$, $g\big(J^N\restriction{[0,s]}\big)$ and $Y_s^{u,N}$ are $\mathfrak{F}_s^N$-measurable, it holds
\[
  \E\Big[h(\eta^N|_{[0,s]})g\big(J^N\restriction{[0,s]}\big)\big(Y_t^{u,N}-Y_s^{u,N}\big)\Big]
  =
  \E\Big[
  h(\eta^N|_{[0,s]})g\big(J^N\restriction{[0,s]}\big)
  \big(\E[Y_t^{u,N}\mid\mathfrak{F}_s^N]-Y_s^{u,N}\big)
  \Big].
\]
Let $\Psi_u(w) = \psi(u \cdot w)$, $w \in \R^n$.
Using a second-order Taylor expansion at each jump (see also Itô's formula for semimartingales~\cite[Section 7 in Part II]{protter-2004}), we have
\begin{align*}
  \Big|\E[Y_t^{u,N}\mid\mathfrak{F}_s^N]-Y_s^{u,N}\Big|
  &\le \E\Big[
    \sum_{k=\lfloor Ns\rfloor+1}^{\lfloor Nt\rfloor}
    \big|
      \Psi_u(J_{k/N}^N)-\Psi_u(J_{k/N-}^N)
      -\nabla \Psi_u(J_{k/N-}^N)\cdot \Delta J_{k/N}^N
      \\
      &\hspace{7em}
      -\frac12\,\Delta J_{k/N}^N\cdot \nabla^2\Psi_u(J_{k/N-}^N)\Delta J_{k/N}^N
    \big|
    \;\Big|\; \mathfrak{F}_s^N 
  \Big] \\
  &\le
  \frac16 \|\psi^{(3)}\|_\infty |u|^3\,
  \E\Big[
  \sum_{k=\lfloor Ns\rfloor+1}^{\lfloor Nt\rfloor}
  |\Delta J_{k/N}^N|^3
  \;\Big|\;\mathfrak{F}_s^N
  \Big].
\end{align*}
Taking expectations and using the estimate~\eqref{eq:cube} obtained above on the jumps of $J^N$, we conclude that
\[
\lim_{N\to\infty}
\E\Big[\big|\E[Y_t^{u,N}\mid \mathfrak{F}_s^N]-Y_s^{u,N}\big|\Big]
=0.
\]
Since $h$ and $g$ are bounded, it follows that
\[
\lim_{N\to\infty}
\E\Big[h(\eta^N|_{[0,s]})g\big(J^N\restriction{[0,s]}\big)(Y_t^{u,N}-Y_s^{u,N})\Big]
=0.
\]
Combining this with \eqref{eq:Vitali}, we obtain 
\[
  \E\big[h(\eta|_{[0,s]})g\big(X\restriction{[0,s]}\big)\big(Y_t^u-Y_s^u\big)\big]=0,
\] 
which is exactly \eqref{eq:conditional}. The proof of Proposition~\ref{prop:independence} is complete.
\end{proof}

\begin{corollary}\label{cor:independence}
Assume~\ref{assump:A1}--\ref{assump:A4} and $\beta>3/4$.
Then, the sequence $(\eta^N)_{N\ge1}$ converges in distribution in $\D\big(\R_+, \H^{-J_0+1,j_0}(\R^d)\big)$ to the unique weak solution to~\eqref{eq:CLT} (see Definition~\ref{def:weak}).
\end{corollary}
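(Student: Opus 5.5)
The proof is a standard tightness-plus-uniqueness argument, and Proposition~\ref{prop:independence} has already supplied the only missing ingredient. By \cite[Propositions~24 and~27]{descours-guillin-michel-nectoux-2024}, the pair $(\eta^N,\sqrt N M^N)_{N\ge1}$ is tight in $\mathcal E$. In particular, $(\eta^N)_{N\ge1}$ is tight in $\D\big(\R_+,\H^{-J_0+1,j_0}(\R^d)\big)$. So it suffices to show that every subsequential limit in distribution of $(\eta^N)$ has one and the same law, namely the law of the unique weak solution of~\eqref{eq:CLT} with initial distribution $\nu_0$. A tight sequence all of whose limit points share one law converges in distribution to that law.

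Fix a subsequence along which $\eta^N$ converges in distribution to some $\eta$. Using tightness of the pair, we extract a further subsequence along which $(\eta^N,\sqrt N M^N)$ converges jointly to some $(\eta,\G)$ in $\mathcal E$. By \cite[Propositions~32 and~36]{descours-guillin-michel-nectoux-2024}, $\G$ is a G-process with continuous paths, and $\eta\in\C\big(\R_+,\H^{-J_0+1,j_0}(\R^d)\big)$ solves~\eqref{eq:CLT} driven by $\G$. These results also give $\eta_0\sim\nu_0$, which is the Gaussian law~\eqref{eq:initial}; this follows from the classical CLT applied to the i.i.d.\ initial parameters. Proposition~\ref{prop:independence} then says that the increments of $\G$ are independent of $\mathcal F^\eta$. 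Hence $\G$ is an $\mathcal F^\eta$-G-process, and $\eta$ is a weak solution in the sense of Definition~\ref{def:weak} with initial distribution $\nu_0$.

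It remains to invoke weak uniqueness, and this is the step requiring care. Pathwise (strong) uniqueness of~\eqref{eq:CLT} is established in \cite[Section~3.5.2]{descours-guillin-michel-nectoux-2024}. We upgrade it to uniqueness in law through the Yamada--Watanabe-type result \cite[Theorem~32.14]{kallenberg-2002}. That theorem requires the initial condition to be independent of the driving noise, with the noise having independent increments relative to the filtration of the solution.

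Both requirements follow from the $\mathcal F^\eta$-G-process property obtained above, as noted in Remark~\ref{rmk:independence}. First, $\G$ is independent of $\eta_0$. Second, $\G_t$ is $\mathcal F^\eta_t$-measurable, so the joint filtration generated by $(\eta,\G)$ is admissible. The law of $\G$ is fixed by Definition~\ref{def:G-process} (see the remark following it), and the law of $\eta_0$ is fixed by $\nu_0$. Kallenberg's result therefore yields that the law of $\eta$ is uniquely determined.

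The main technical point to check is that \cite[Theorem~32.14]{kallenberg-2002} applies in this infinite-dimensional Hilbertian setting. We would verify this by testing against a countable dense family of $\varphi\in\H^{J_0,j_0}(\R^d)$, which reduces the question to finite-dimensional statements. Once this is done, all limit points coincide with the unique weak solution $\eta^*$, and the whole sequence converges, as claimed.
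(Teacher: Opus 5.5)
Your proposal is correct and follows essentially the same route as the paper. Every limit point $(\eta,\G)$ of $(\eta^N,\sqrt N M^N)$ is identified as a weak solution via \cite[Propositions~32 and~36]{descours-guillin-michel-nectoux-2024} and Proposition~\ref{prop:independence}, and strong uniqueness from \cite[Section~3.5.2]{descours-guillin-michel-nectoux-2024} is upgraded to uniqueness in law using the independence of $\G$ from $\eta_0$ and the fact that Definition~\ref{def:G-process} fixes the law of $\G$, so all limit points coincide; the paper simply asserts the Yamada--Watanabe step that you flag as delicate in the Hilbertian setting, and the standard gluing argument for it only needs Polish path spaces, so your reduction to finitely many test functions is unnecessary.
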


\begin{proof}
Let $(\eta, \G)$ be a limit point of the sequence $(\eta^N,\sqrt{N}M^N)_{N \ge 1}$, as introduced above (see~\eqref{eq:E}). Recall that by~\cite[Propositions~32 and~36]{descours-guillin-michel-nectoux-2024}, the process $\G$ is a G-process driving equation~\eqref{eq:CLT} satisfied by $\eta$ with initial distribution $\nu_0$. Moreover, by Proposition~\ref{prop:independence}, $\G$ is an $\mathcal{F}^\eta$-G-process, and hence $\eta$ is a weak solution of~\eqref{eq:CLT} in the sense of Definition~\ref{def:weak}.
By strong uniqueness for~\eqref{eq:CLT}, proved in~\cite[Section~3.5.2]{descours-guillin-michel-nectoux-2024}, weak uniqueness holds in the class of solutions from Definition~\ref{def:weak}, since the driving G-process is independent of the initial condition and its law is uniquely determined by Definition~\ref{def:G-process}. Therefore all limit points have the same law, and the whole sequence $(\eta^{N})_{N\ge 1}$ converges in distribution to this unique weak solution.
\end{proof}

From now on, we denote by $\eta^*$ the limiting fluctuation process from Theorem~\ref{thm:CLT}, namely the weak solution of~\eqref{eq:CLT}, and by $\G^*$ the associated $\mathcal F^{\eta^*}$-G-process driving this equation.

\subsection{Extension of the limit equation to time-dependent test functions}
\label{sec:extension}

The next step in the proof of Theorem~\ref{thm:main} is to extend the weak formulation~\eqref{eq:CLT} to test functions depending on both time and space. This is made precise by the following proposition.

\begin{proposition}\label{prop:extension}
  Assume~\ref{assump:A1}--\ref{assump:A4} and $\beta > 3/4$. Let $t\ge0$.
  Then, almost surely, for every $s\in[0,t]$ and every
  $f\in \C^1\big([0,t],\H^{J_0,j_0}(\R^d)\big)$,
  \begin{equation}\label{eq:extension}
  \begin{aligned}
    \langle f(s),\eta_s^*\rangle
    &= \langle f(0),\eta_0^*\rangle 
    +\langle f(s),\G_s^*\rangle 
    - \int_0^s \langle \partial_r f(r),\G_r^*\rangle\,dr 
    + \int_0^s \langle \partial_r f(r),\eta_r^*\rangle\,dr \\
    &\quad + \alpha \int_0^s \E_\pi\Big[
    \big(y-\langle \sigma_*(x,\cdot),\bar\mu_r\rangle\big)
    \big\langle \nabla f(r)\cdot \nabla\sigma_*(x,\cdot),\eta_r^*\big\rangle \\
    &\hspace*{3cm} - 
    \langle \sigma_*(x,\cdot),\eta_r^*\rangle
    \big\langle \nabla f(r)\cdot \nabla\sigma_*(x,\cdot),\bar\mu_r\big\rangle
    \Big]\,dr.
  \end{aligned}
\end{equation}
\end{proposition}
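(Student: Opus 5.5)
The identity~\eqref{eq:extension} is a product rule for the pairing $\langle f(r),\eta^*_r\rangle$. I plan to prove it by a Riemann-sum (time discretization) argument, using only that~\eqref{eq:CLT} holds almost surely simultaneously for all $\varphi\in\H^{J_0,j_0}(\R^d)$ and all times. It is convenient to write~\eqref{eq:CLT} as
\[
\langle\varphi,\eta^*_r\rangle=\langle\varphi,\eta^*_0\rangle+\langle\varphi,\G^*_r\rangle+\int_0^r\langle\varphi,B_u\rangle\,du,
\]
where $B_u$ denotes the linear functional
\[
\varphi\mapsto\alpha\E_\pi\big[(y-\langle\sigma_*,\bar\mu_u\rangle)\langle\nabla\varphi\cdot\nabla\sigma_*,\eta^*_u\rangle-\langle\sigma_*,\eta^*_u\rangle\langle\nabla\varphi\cdot\nabla\sigma_*,\bar\mu_u\rangle\big].
\]
First I check that $B_u\in\H^{-J_0,j_0}(\R^d)$ with $\sup_{u\le t}\|B_u\|_{\H^{-J_0,j_0}}<\infty$ almost surely. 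This follows from three facts: $\partial_i\varphi\in\H^{J_0-1,j_0}$ with norm controlled by $\|\varphi\|_{\H^{J_0,j_0}}$; multiplication by $\partial_i\sigma_*(x,\cdot)\in\C_b^\infty$ is bounded on $\H^{J_0-1,j_0}$ uniformly in $x$; and $\eta^*\in\C([0,t],\H^{-J_0+1,j_0})$ and $\bar\mu\in\C([0,t],\H^{-L,\gamma})$ are bounded on compacts, together with the embedding of Proposition~\ref{prop:embedding} and the moment assumption on $y$.

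Next I fix $s\in[0,t]$ and a partition $0=r_0<\dots<r_m=s$, and I telescope:
\[
\langle f(s),\eta^*_s\rangle-\langle f(0),\eta^*_0\rangle=\sum_k\Big(\langle f(r_{k+1})-f(r_k),\eta^*_{r_k}\rangle+\langle f(r_{k+1}),\eta^*_{r_{k+1}}-\eta^*_{r_k}\rangle\Big).
\]
For the first sum, since $f\in\C^1([0,t],\H^{J_0,j_0})$, we have $f(r_{k+1})-f(r_k)=\int_{r_k}^{r_{k+1}}\partial_rf\,dr$ in $\H^{J_0,j_0}$. Continuity of $\eta^*$ in $\H^{-J_0+1,j_0}\subset\H^{-J_0,j_0}$ then makes this sum converge to $\int_0^s\langle\partial_rf(r),\eta^*_r\rangle dr$ as the mesh tends to zero. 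For the second sum, I insert~\eqref{eq:CLT} with $\varphi=f(r_{k+1})$ and obtain two pieces. The drift piece $\sum_k\int_{r_k}^{r_{k+1}}\langle f(r_{k+1}),B_u\rangle du$ converges to $\int_0^s\langle f(u),B_u\rangle du$, by uniform continuity of $f$ in $\H^{J_0,j_0}$ and the uniform bound on $B_u$; this limit is exactly the last line of~\eqref{eq:extension}. The noise piece is $\sum_k\langle f(r_{k+1}),\G^*_{r_{k+1}}-\G^*_{r_k}\rangle$. Applying discrete summation by parts (Abel), it equals
\[
\langle f(s),\G^*_s\rangle-\sum_k\langle f(r_{k+1})-f(r_k),\G^*_{r_k}\rangle,
\]
using $\G^*_0=0$. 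By the same argument as for the first sum, now using continuity of $\G^*$ in $\H^{-J_0,j_0}$, this converges to $\langle f(s),\G^*_s\rangle-\int_0^s\langle\partial_rf(r),\G^*_r\rangle dr$. No stochastic integration is needed, because the time dependence sits entirely in the deterministic test function.

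Finally, for the quantifier order ("almost surely, for every $s$ and every $f$"), I work on the single full-measure event where~\eqref{eq:CLT} holds for all $\varphi$ and all times, and where $\eta^*,\G^*$ have continuous paths. The argument above is pathwise on this event, so it gives the identity for all $s$ and $f$ at once. The step I expect to be the main obstacle is the uniform bound on $B_u$ in $\H^{-J_0,j_0}$, namely checking that the multiplier and embedding estimates line up with the exponents in~\eqref{eq:exponents}, and that $\G^*_0=0$. The remainder is routine Riemann-sum convergence.
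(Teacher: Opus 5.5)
Your proof is correct, but it takes a different route from the paper.

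The paper proves \eqref{eq:extension} first for separable test functions $f(r)=g(r)h$, with $g\in\C^1([0,t])$ and $h\in\H^{J_0,j_0}(\R^d)$. It applies \eqref{eq:CLT} to the fixed $h$ and integrates by parts in the scalar time variable. By linearity this extends to the algebraic tensor product $D$. The paper then shows that $D$ is dense in $\C^1\big([0,t],\H^{J_0,j_0}(\R^d)\big)$, using orthogonal projections $P_k$ onto an orthonormal basis, which converge uniformly to the identity on the compact sets $f([0,t])$ and $\partial_r f([0,t])$. It concludes because, on a full-measure event, the defect map $\Lambda(\omega)$ is continuous and linear and vanishes on $D$.

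You instead prove a discrete product rule (telescoping plus Abel summation) directly for arbitrary $f$ and let the mesh go to zero. This removes the density lemma entirely. The price is that you do the uniform estimates by hand, whereas the paper packages them in the continuity of $\Lambda(\omega)$.

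The two ingredients you flag are used implicitly by the paper too:
\begin{itemize}
\item The bound $\sup_{u\le t}\|B_u\|_{\H^{-J_0,j_0}}<\infty$ is exactly what makes $\Lambda(\omega)$ continuous; the paper calls this ``straightforward.'' Your estimates do line up. Multiplication by a $\C_b^\infty$ function is bounded on $\H^{J_0-1,j_0}(\R^d)$, and $\sigma_*(x,\cdot)\in\H^{J_0-1,j_0}(\R^d)$ since $j_0>d/2$. The $\bar\mu$ term is handled by $\H^{J_0,j_0}(\R^d)\hookrightarrow\C^{1,j_0}(\R^d)$ together with the bounded $j_0$-moments of $\bar\mu_u$.
\item The fact $\G^*_0=0$ is needed in both proofs, since neither identity contains a $\langle f(0),\G^*_0\rangle$ term. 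It is immediate: evaluating \eqref{eq:CLT} at time $0$ gives $\langle\varphi,\G^*_0\rangle=0$ for all $\varphi$, on the same full-measure event you already work on.
\end{itemize}

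Your route is shorter and more self-contained. The paper's route confines the calculus to a one-dimensional integration by parts and leaves the rest to a soft extension argument.
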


\begin{proof}

\emph{Step 1.} Let $h\in \H^{J_0,j_0}(\R^d)$ and $g\in \C^1([0,t])$. Applying~\eqref{eq:CLT} to the
fixed spatial test function $h$ and integrating by parts in time gives, for
every $s\in[0,t]$,
\begin{align*}
  \langle g(s)h,\eta_s^*\rangle-\langle g(0)h,\eta_0^*\rangle
  &= \langle g(s)h,\G_s^*\rangle  
  - \int_0^s \langle g'(r)h,\G_r^*\rangle\,dr 
  + \int_0^s \langle g'(r)h,\eta_r^*\rangle\,dr \\
  &\quad + \alpha\int_0^s g(r)\,\E_\pi\Big[
  \big(y-\langle \sigma_*(x,\cdot),\bar\mu_r\rangle\big)
  \langle \nabla h\cdot\nabla\sigma_*(x,\cdot),\eta_r^*\rangle \\
  &\hspace*{3.7cm} - 
  \langle \sigma_*(x,\cdot),\eta_r^*\rangle
  \langle \nabla h\cdot\nabla\sigma_*(x,\cdot),\bar\mu_r\rangle
  \Big]\,dr.
\end{align*}
This is exactly~\eqref{eq:extension} with $f(r)=g(r)h$. By linearity,
the same identity holds almost surely for every $s \in [0,t]$ and every function of the form
\[
  f \in D = 
  \Big\{
    r \in [0,t] \mapsto \sum_{i=1}^k g_i(r)h_i,\
    k \ge 1, \
    g_i\in \C^1([0,t]),\ 
    h_i\in \H^{J_0,j_0}(\R^d)
  \Big\}
  .
\]
We will extend this identity to arbitrary functions in $\C^1\big([0,t],\H^{J_0,j_0}(\R^d)\big)$ by density of $D$ in $\C^1\big([0,t],\H^{J_0,j_0}(\R^d)\big)$ and passage to the limit. 

\medskip
\noindent
\emph{Step 2.} In this step, we prove that $D$ is dense in $\C^1\big([0,t],\H^{J_0,j_0}(\R^d)\big)$. Fix $f \in \C^1\big([0,t],\H^{J_0,j_0}(\R^d)\big)$ and let $(e_n)_{n\ge1}$ be an orthonormal basis of $\H^{J_0,j_0}(\R^d)$. For $k\ge1$, let $P_k:\H^{J_0,j_0}(\R^d)\to \H^{J_0,j_0}(\R^d)$ denote the orthogonal projection onto $\mathrm{span}(e_1,\dots,e_k)$, and define 
\[
  f_k = P_k f: r \in [0,t] \longmapsto \sum_{i=1}^k \langle f(r),e_i\rangle_{\H^{J_0,j_0}} \,e_i.
\]
Note that for each $i \ge 1$, the map $r \mapsto \langle f(r),e_i\rangle$ belongs to
$\C^1([0,t])$, with derivative
$\frac{d}{dr}\langle f(r),e_i\rangle
=
\langle \partial_r f(r),e_i\rangle$.
Hence, for every $k \ge 1$, $f_k\in D$, and $\partial_r f_k =P_k\partial_r f$. 
By Parseval's identity, for every $h\in \H^{J_0,j_0}(\R^d)$, $P_k h$ converges to $h$ in $\H^{J_0,j_0}(\R^d)$. Moreover, since $P_k$ is an orthogonal projection, its operator norm is uniformly bounded by $1$. Hence the convergence is actually uniform on compact subsets of $\H^{J_0,j_0}(\R^d)$, namely 
\[
  \sup_{h \in \K} \|P_kh - h\|_{\H^{J_0,j_0}} \longrightarrow 0
\]
as $k \to \infty$ for every compact subset $\K \subset \H^{J_0,j_0}(\R^d)$.
Since $f$ and $\partial_r f$ are continuous from the compact time interval $[0,t]$ into $\H^{J_0,j_0}(\R^d)$, the sets $\K_0 = f([0,t])$ and $\K_1 = \partial_r f([0,t])$ are compact in $\H^{J_0,j_0}(\R^d)$. Therefore,
\[
  \sup_{r\in[0,t]}
  \|f_k(r)-f(r)\|_{\H^{J_0,j_0}}
  =
  \sup_{h \in \K_0}
  \|P_kh-h\|_{\H^{J_0,j_0}}
  \longrightarrow 0,
\]
and, since $\partial_r f_k=P_k\partial_r f$,
\[
  \sup_{r\in[0,t]}
  \|\partial_r f_k(r)-\partial_r f(r)\|_{\H^{J_0,j_0}}
  =
  \sup_{h\in\K_1}
  \|P_kh-h\|_{\H^{J_0,j_0}}
  \longrightarrow 0.
\]
Consequently, we have $f_k\to f$ in
$\C^1\big([0,t],\H^{J_0,j_0}(\R^d)\big)$, which proves that $D$ is dense in $\C^1\big([0,t],\H^{J_0,j_0}(\R^d)\big)$.

\medskip
\noindent
\emph{Step 3.} We now pass to the limit from test functions in $D$
to arbitrary functions in $\C^1\big([0,t],\H^{J_0,j_0}(\R^d)\big)$. For $f \in \C^1\big([0,t],\H^{J_0,j_0}(\R^d)\big)$, define the random linear map $\Lambda=\Lambda(\omega)$ by
\[
\begin{aligned}
  \forall s \in [0,t], \quad
  \Lambda f(s)
  ={}&
  \big\langle f(s),\eta_s^*\big\rangle
  -\big\langle f(0),\eta_0^*\big\rangle
  -\big\langle f(s),\G_s^*\big\rangle\\
  &
  -\int_0^s \big\langle \partial_r f(r),\eta_r^*\big\rangle\,dr 
  +\int_0^s \big\langle \partial_r f(r),\G_r^*\big\rangle\,dr\\
  &-\alpha\int_0^s
  \E_\pi\Big[
    \big(y-\langle\sigma_*(x,\cdot),\bar\mu_r\rangle\big)
    \big\langle \nabla f(r)\cdot\nabla\sigma_*(x,\cdot),\eta_r^*\big\rangle\\
  &\hspace*{2.6cm}-
    \big\langle\sigma_*(x,\cdot),\eta_r^*\big\rangle
    \big\langle \nabla f(r)\cdot\nabla\sigma_*(x,\cdot),\bar\mu_r\big\rangle
  \Big]\,dr.
\end{aligned}
\]
Let $\Omega^*$ be the full-probability set on which both the trajectories of $\eta^*$ and $\G^*$ are continuous and identity~\eqref{eq:extension} holds for all $s\in[0,t]$ and all functions in $D$. For every $\omega\in\Omega^*$, it is straightforward to check that $\Lambda(\omega)$ is a continuous linear map from $\C^1\big([0,t],\H^{J_0,j_0}(\R^d)\big)$ to $\C([0,t],\R)$. By the two previous steps, $\Lambda(\omega)$ vanishes on the dense subset $D$, hence it is identically zero on $\C^1\big([0,t],\H^{J_0,j_0}(\R^d)\big)$. Therefore $\Lambda(\omega)\equiv0$ on $\Omega^*$, which is precisely~\eqref{eq:extension}. This concludes the proof of Proposition~\ref{prop:extension}.
\end{proof}

\subsection{Proof of Theorem~\ref{thm:main}}

We now combine Proposition~\ref{prop:extension} with the backward
equation~\eqref{eq:T} studied in Section~\ref{sec:pde}.
Fix $t \ge 0$ and let $\varphi\in \C_b^\infty(\R^d)$. Let $f^\varphi$ denote the solution of~\eqref{eq:T} in $\C^1\big([0,t],\H^{J_0,j_0}(\R^d)\big) \cap \C^1\big([0,t]\times\R^d\big)$ with terminal condition $f^\varphi(t,\cdot)=\varphi$, given by
Theorem~\ref{thm:pde}. Applying Proposition~\ref{prop:extension} to $f=f^\varphi$ and using that $f^\varphi$ solves equation~\eqref{eq:T}, we obtain
\begin{equation}\label{eq:representation}
  \as\quad \langle \varphi,\eta_t^*\rangle
  =
  \langle f^\varphi(0),\eta_0^*\rangle
  + Z_t^\varphi,
\end{equation}
where we set
\[
  Z_t^\varphi
  =
  \langle \varphi,\G_t^*\rangle
  -\int_0^t \langle \partial_s f^\varphi(s),\G_s^*\rangle\,ds.
\]
Using this representation, we prove Theorem~\ref{thm:main} by treating separately the initial term $\langle f^\varphi(0),\eta_0^*\rangle$ and the noise contribution $Z_t^\varphi$, and then using their independence, given by Proposition~\ref{prop:independence} and Remark~\ref{rmk:independence}, to conclude.

\begin{proof}[Proof of Theorem~\ref{thm:main}]
\emph{Step 1.} Let $n \ge 1$ and $\varphi_1,\dots,\varphi_n\in \C_b^\infty(\R^d)$. We begin by proving that
$\big(\langle \varphi_1,\eta^*\rangle,\dots,\langle \varphi_n,\eta^*\rangle\big)$
is a centered Gaussian process in $\C(\R_+,\R^n)$.
To this end, let $p\ge1$ and fix times $0 \le t_1 \le \dots \le t_p$. Let us show that
\[
  \big(\langle \varphi_i,\eta_{t_j}^*\rangle\big)_
  {\substack{1\le i\le n,\ 1\le j\le p}}
\]
is a centered Gaussian vector.

For clarity, in this step we make explicit the dependence on the terminal time and write
$f_{t_j}^{\varphi_i}$ for the solution of~\eqref{eq:T} on $[0,t_j]$ with terminal datum $f_{t_j}^{\varphi_i}(t_j,\cdot)=\varphi_i$. Let
$(a_{i,j})_{1\le i\le n,\ 1\le j\le p}\subset\R$. By
\eqref{eq:representation}, it holds that
\[
\sum_{j=1}^p\sum_{i=1}^n a_{i,j}
\langle \varphi_i,\eta_{t_j}^*\rangle
=
\Big\langle
\sum_{j=1}^p\sum_{i=1}^n a_{i,j}
f_{t_j}^{\varphi_i}(0),\eta_0^*
\Big\rangle
+
\sum_{j=1}^p\sum_{i=1}^n a_{i,j}Z_{t_j}^{\varphi_i}.
\]
The first term on the right-hand side is a centered Gaussian random variable by the description of the initial
fluctuation law in Theorem~\ref{thm:CLT}, since each
$f_{t_j}^{\varphi_i}(0)$ belongs to $\H^{J_0,j_0}(\R^d)\hookrightarrow
\H^{J_0-1,j_0}(\R^d)$. Moreover, by Corollary~\ref{cor:independence} and Remark \ref{rmk:independence}, this initial term is independent of $\G^*$, and hence of all the variables
$Z_{t_j}^{\varphi_i}$, $1\le i\le n$, $1\le j\le p$.

Thus, it remains to check that 
\[
  Y=\sum_{j=1}^p\sum_{i=1}^n a_{i,j}Z_{t_j}^{\varphi_i}
\]
is a centered Gaussian random variable.
For $m\ge1$, define 
\[
  Y_m
  =
  \sum_{j=1}^p\sum_{i=1}^n
  a_{i,j}
  \bigg(
    \langle \varphi_i,\G_{t_j}^*\rangle
    -
    \frac{t_j}{m}
    \sum_{q=1}^m
    \big\langle
      \partial_s f_{t_j}^{\varphi_i}\Big(\frac{q t_j}{m}\Big),
      \G_{q t_j/m}^*
    \big\rangle
  \bigg).
\]
Since, by definition, $\G^*$ is almost surely continuous, for each pair of indices $(i,j)$, the map
$s\mapsto\langle \partial_s f_{t_j}^{\varphi_i}(s),\G_s^*\rangle$ is almost surely continuous on $[0,t_j]$. Hence $Y_m\to Y$ almost surely, and in particular, in distribution, as $m\to\infty$.
Let us now show that each $Y_m$ is a centered Gaussian random variable. Fix $m\ge1$ and consider the
finite family

\[
  F_m=\big\{\varphi_i, \ \partial_s f_{t_j}^{\varphi_i}(q t_j/m), \
  1\le i\le n,\ 1\le j\le p,\ 1\le q\le m\big\} \subset \H^{J_0,j_0}(\R^d).
\]
By Definition~\ref{def:G-process}, the process
\[
  s \in\R_+ \longmapsto \big(\langle h,\G_s^*\rangle\big)_{h\in F_m}
\]
is a finite-dimensional centered Gaussian process. 
In particular, its evaluations at the finitely many times $t_j$ and $q t_j/m$, $1\le j\le p$, $1\le q\le m$, form a centered Gaussian vector. Since $Y_m$ is a linear combination of the coordinates of this vector, $Y_m$ is a centered Gaussian random variable.

Finally, by the stability of Gaussian laws under weak convergence (see, e.g.,~\cite[Proposition~1.1]{le-gall-2016}), $Y$ is again a centered Gaussian random variable. Together with the Gaussianity of the initial term and the independence established above (see Remark~\ref{rmk:independence}), this concludes the proof of the Gaussianity statement.

\medskip
\noindent
\emph{Step 2.} We now turn to the proof of the covariance formula~\eqref{eq:covariance}.
Let $\varphi,\psi\in \C_b^\infty(\R^d)$ and fix $0 < s\le t$. Using~\eqref{eq:representation} and the independence between $\eta_0^*$ and $\G^*$ from Proposition~\ref{prop:independence} and Remark~\ref{rmk:independence}, it holds
\begin{equation}\label{eq:split}
  \Cov\big(\langle \varphi,\eta_s^*\rangle,\langle \psi,\eta_t^*\rangle\big)
  =
  \Cov\big(
  \langle f^\varphi(0),\eta_0^*\rangle,
  \langle f^\psi(0),\eta_0^*\rangle
  \big) +
  \Cov\big(Z_s^\varphi,Z_t^\psi\big).
\end{equation}
By the covariance structure~\eqref{eq:initial} of the initial fluctuation in Theorem~\ref{thm:CLT}, we have
\[
  \Cov\big(
  \langle f^\varphi(0),\eta_0^*\rangle,
  \langle f^\psi(0),\eta_0^*\rangle
  \big)
  =
  \Cov\big(f^\varphi(0,\theta_0^1),f^\psi(0,\theta_0^1)\big).
\]

It remains to identify the covariance $\Cov\big(Z_s^\varphi,Z_t^\psi\big)$ of the noise terms in~\eqref{eq:split}. Recall that
\[
  Z_s^\varphi
  =
  \langle \varphi,\G_s^*\rangle
  -
  \int_0^s
  \langle \partial_u f^\varphi(u),\G_u^*\rangle\,du,
  \qquad
  Z_t^\psi
  =
  \langle \psi,\G_t^*\rangle
  -
  \int_0^t
  \langle \partial_v f^\psi(v),\G_v^*\rangle\,dv.
\]
Since all these random variables are centered, we have 
\[
\begin{aligned}
  \Cov(Z_s^\varphi,Z_t^\psi)
  &=
  \E\big[\langle \varphi,\G_s^*\rangle
          \langle \psi,\G_t^*\rangle\big] \\
  &\quad
  - \E\bigg[
      \langle \varphi,\G_s^*\rangle
      \int_0^t
      \langle \partial_v f^\psi(v),\G_v^*\rangle\,dv
    \bigg] \\
  &\quad
  - \E\bigg[
      \langle \psi,\G_t^*\rangle
      \int_0^s
      \langle \partial_u f^\varphi(u),\G_u^*\rangle\,du
    \bigg] \\
  &\quad
  + \E\bigg[
      \int_0^s
      \langle \partial_u f^\varphi(u),\G_u^*\rangle\,du
      \int_0^t
      \langle \partial_v f^\psi(v),\G_v^*\rangle\,dv
    \bigg].
\end{aligned}
\]
We now simplify this expression.
All applications of Fubini's theorem below are justified by the covariance formula~\eqref{eq:cov-G}, Assumption~\ref{assump:A2} and the fact that $f^\varphi\in \C^1\big([0,s],\H^{J_0,j_0}(\R^d)\big)$ and $f^\psi\in \C^1\big([0,t],\H^{J_0,j_0}(\R^d)\big)$.
Using the covariance structure of $\G^*$~\eqref{eq:cov-G}, we have
\[
  \E\big[\langle \varphi,\G_s^*\rangle
          \langle \psi,\G_t^*\rangle\big]
  =
  \alpha^2
  \int_0^s
  \Cov_\pi\big(
    Q_r[\varphi],
    Q_r[\psi]
  \big)\,dr.
\]
Moreover, by linearity of $Q_r[\cdot]$ and since $f^\psi(t)=\psi$, 
\[
\begin{aligned}
  \E\bigg[
      \langle \varphi,\G_s^*\rangle
      \int_0^t
      \langle \partial_v f^\psi(v),\G_v^*\rangle\,dv
    \bigg]
  &=
  \int_0^t
  \E\big[
    \langle \varphi,\G_s^*\rangle
    \langle \partial_v f^\psi(v),\G_v^*\rangle
  \big]\,dv \\
  &=
  \alpha^2
  \int_0^t
  \int_0^{s\wedge v}
  \Cov_\pi\big(
    Q_r[\varphi],
    Q_r[\partial_v f^\psi(v)]
  \big)\,dr\,dv\\
  &= 
  \alpha^2 
  \int_0^s
  \Cov_\pi\bigg(
    Q_r[\varphi], \int_r^t
    Q_r[\partial_v f^\psi(v)]\,dv
  \bigg)\,dr \\
  &=
  \alpha^2 
  \int_0^s
  \Cov_\pi\Big(
    Q_r[\varphi], Q_r[\psi]-Q_r[f^\psi(r)]
  \Big)\,dr.
\end{aligned}
\]
Using the same computations and the terminal datum $f^\varphi(s)=\varphi$, we have
\[
\begin{aligned}
  \E\bigg[
      \langle \psi,\G_t^*\rangle
      \int_0^s
      \langle \partial_u f^\varphi(u),\G_u^*\rangle\,du
    \bigg]
  &=
  \alpha^2
  \int_0^s
  \Cov_\pi\big(
    Q_r[\varphi]-Q_r[f^\varphi(r)],
    Q_r[\psi]
  \big)\,dr.
\end{aligned}
\]
Similarly, for the last term,
\[
\begin{aligned}
  &\E\bigg[
      \int_0^s
      \langle \partial_u f^\varphi(u),\G_u^*\rangle\,du
      \int_0^t
      \langle \partial_v f^\psi(v),\G_v^*\rangle\,dv
    \bigg] \\
  &\qquad =
  \alpha^2
  \int_0^s
  \Cov_\pi\bigg(
    \int_r^s Q_r[\partial_u f^\varphi(u)]\,du,
    \int_r^t Q_r[\partial_v f^\psi(v)]\,dv
  \bigg)\,dr \\
  &\qquad =
  \alpha^2
  \int_0^s
  \Cov_\pi\big(
    Q_r[\varphi]-Q_r[f^\varphi(r)],
    Q_r[\psi]-Q_r[f^\psi(r)]
  \big)\,dr.
\end{aligned}
\]
Combining these four terms, the covariance reduces to
\[
\begin{aligned}
  \Cov(Z_s^\varphi,Z_t^\psi)
  &=
  \alpha^2
  \int_0^s
  \Big[
    \Cov_\pi(Q_r[\varphi],Q_r[\psi]) \\
  &\qquad\qquad\quad
    -\Cov_\pi\big(
      Q_r[\varphi],
      Q_r[\psi]-Q_r[f^\psi(r)]
    \big) \\
  &\qquad\qquad\quad
    -\Cov_\pi\big(
      Q_r[\varphi]-Q_r[f^\varphi(r)],
      Q_r[\psi]
    \big) \\
  &\qquad\qquad\quad
    +\Cov_\pi\big(
      Q_r[\varphi]-Q_r[f^\varphi(r)],
      Q_r[\psi]-Q_r[f^\psi(r)]
    \big)
  \Big]\,dr \\
  &=
  \alpha^2
  \int_0^s
  \Cov_\pi\big(
    Q_r[f^\varphi(r)],
    Q_r[f^\psi(r)]
  \big)\,dr.
\end{aligned}
\]
Combining this identity with~\eqref{eq:split} yields
\[
\begin{aligned}
  \Cov\big(
    \langle \varphi,\eta_s^*\rangle,
    \langle \psi,\eta_t^*\rangle
  \big)
  &=
  \Cov\big(
    f^\varphi(0,\theta_0^1),
    f^\psi(0,\theta_0^1)
  \big) \\
  &\qquad+
  \alpha^2
  \int_0^s
  \Cov_\pi\big(
    Q_r[f^\varphi(r)],
    Q_r[f^\psi(r)]
  \big)\,dr,
\end{aligned}
\]
which is exactly~\eqref{eq:covariance}. The proof of Theorem~\ref{thm:main} is complete.
\end{proof}

\section{Backward transport equation}
\label{sec:pde}

In this section, we study the backward transport equation~\eqref{eq:T} associated with Theorem~\ref{thm:main}, which we recall here for convenience: for $t \ge 0$ and $\varphi \in \C_b^\infty(\R^d)$, we consider the backward problem on $[0,t] \times \R^d$:
\begin{equation}\tag{T}
  \begin{aligned}
    &\partial_s f(s,w)
    + \alpha\,\E_\pi\Big[
      \big(y-\langle \sigma_*(x,\cdot),\bar\mu_s\rangle\big)
      \nabla f(s,w)\cdot \nabla \sigma_*(x,w) \\
      &\hspace*{3.85cm}-
      \big\langle \nabla f(s) \cdot \nabla \sigma_*(x,\cdot),\bar\mu_s\big\rangle\,
      \sigma_*(x,w)
    \Big]
    =0,\\
    &f(t,w) = \varphi(w).
  \end{aligned}
\end{equation}
We keep throughout this section Assumptions~\ref{assump:A1}--\ref{assump:A4}. We also only assume that $\beta>1/2$ since only the law of large numbers (Theorem~\ref{thm:LLN}) is needed here.

\begin{theorem}
\label{thm:pde}
Assume~\ref{assump:A1}--\ref{assump:A4} and $\beta > 1/2$. For any final time $t \ge 0$ and final datum $\varphi \in \C_b^\infty(\R^d)$, equation~\eqref{eq:T} admits a unique mild solution $f^\varphi \in \C\big([0,t],\H^{J_0,j_0}(\R^d)\big)$ in the sense of Definition~\ref{def:mild-T}. Moreover,
\[
  f^\varphi \in \C^1\big([0,t],\H^{J_0,j_0}(\R^d)\big) \cap \C^1\big([0,t]\times \R^d\big)
\]
and therefore $f^\varphi$ is a classical solution to the backward transport equation~\eqref{eq:T}. 
Finally, the mapping $\varphi\in\H^{J_0,j_0}(\R^d)
\mapsto f^\varphi\in \C\big([0,t],\H^{J_0,j_0}(\R^d)\big)$ is linear and Lipschitz-continuous. 
\end{theorem}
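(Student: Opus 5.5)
We plan to treat \eqref{eq:T} as a linear transport equation with a rank-type nonlocal perturbation, and to solve it by a Duhamel/fixed-point argument along characteristics. Set $b(s,w)=\alpha\,\E_\pi\big[(y-\langle\sigma_*(x,\cdot),\bar\mu_s\rangle)\nabla\sigma_*(x,w)\big]$ and, for a function $g$, the nonlocal term $\mathcal{N}_s[g](w)=\alpha\,\E_\pi\big[\langle\nabla g\cdot\nabla\sigma_*(x,\cdot),\bar\mu_s\rangle\,\sigma_*(x,w)\big]$. Then \eqref{eq:T} reads $\partial_s f+b(s)\cdot\nabla f=\mathcal{N}_s[f(s)]$. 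By \ref{assump:A1}, \ref{assump:A2} and the continuity of $s\mapsto\bar\mu_s$ in $\H^{-L,\gamma}(\R^d)$ (Theorem~\ref{thm:LLN}), the field $b$ is continuous in $s$, smooth in $w$, with all $w$-derivatives bounded uniformly on $[0,t]$ (using $\E|y|<\infty$ and $\sup_{s\le t}|\langle\sigma_*(x,\cdot),\bar\mu_s\rangle|\le\|\sigma_*\|_\infty$). Hence the flow $X_{s,r}(w)$ of $\dot X=b(r,X)$ is a global $\C^\infty$ diffeomorphism whose derivatives of all orders are bounded on $[0,t]^2\times\R^d$ and which moves points by at most $C|r-s|$. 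We would define a mild solution (Definition~\ref{def:mild-T}) by
\[
  f(s,w)=\varphi\big(X_{s,t}(w)\big)-\int_s^t \mathcal{N}_r[f(r)]\big(X_{s,r}(w)\big)\,dr .
\]

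The key estimates are as follows. First, the composition operator $g\mapsto g\circ X_{s,r}$ is bounded on $\H^{J_0,j_0}(\R^d)$ uniformly in $s,r\in[0,t]$. This follows from the chain rule, the bounded derivatives of the flow, a bounded Jacobian of its inverse, and the comparability $1+|X_{s,r}(w)|^{2j_0}\asymp 1+|w|^{2j_0}$. Second, $\mathcal{N}_r$ is a finite-rank-like bounded operator: $\mathcal{N}_r[g](w)=\E_\pi[\ell_{r,x}(g)\sigma_*(x,w)]$, where $\ell_{r,x}(g)=\alpha\langle\nabla g\cdot\nabla\sigma_*(x,\cdot),\bar\mu_r\rangle$ satisfies $|\ell_{r,x}(g)|\le C\|g\|_{\H^{J_0,j_0}}$ uniformly in $(r,x)$. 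This uses Proposition~\ref{prop:embedding} ($\partial_ig\in\H^{L,\gamma}$) and $\sup_{r\le t}\|\bar\mu_r\|_{\H^{-L,\gamma}}<\infty$. Moreover $w\mapsto\sigma_*(x,w)$ lies in $\C_b^\infty\subset\H^{J_0,j_0}$ with uniformly bounded norm because $j_0>d/2$. Consequently $\|\mathcal{N}_r[g]\|_{\H^{J_0,j_0}}\le C\|g\|_{\H^{J_0,j_0}}$. Since $\varphi\in\C_b^\infty\subset\H^{J_0,j_0}$, the right-hand side of the mild equation defines an affine map $\Phi$ on $\C([0,t],\H^{J_0,j_0})$ with $\|\Phi f-\Phi g\|(s)\le C\int_s^t\|f-g\|(r)\,dr$. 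Some iterate of $\Phi$ is a contraction (equivalently, $\Phi$ is a contraction under a weighted sup-norm $\sup_s e^{\lambda s}\|\cdot\|$). This yields existence and uniqueness of the mild solution. The same Gronwall bound applied to $\varphi_1-\varphi_2$ gives linearity and Lipschitz continuity of $\varphi\mapsto f^\varphi$, extended by density to all $\varphi\in\H^{J_0,j_0}$.

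For regularity, the main obstacle is the time derivative. Differentiating the mild formula in $s$ formally gives $\partial_s f=-b\cdot\nabla f+\mathcal{N}_s[f(s)]$, but $b\cdot\nabla f$ loses one derivative and only lies in $\H^{J_0-1,j_0}$. To obtain $\C^1([0,t],\H^{J_0,j_0})$ we therefore need to propagate extra regularity. We would run the fixed point simultaneously in $\C([0,t],\H^{J_0+1,j_0})$ (or any higher order), which is possible because $\varphi\in\C_b^\infty$ and $\mathcal{N}_r$ maps into $\C_b^\infty$-type functions. The range of $\mathcal{N}_r$ only requires $\nabla g$ in $\H^{L,\gamma}$, so it is bounded from $\H^{J_0,j_0}$ into $\H^{K,j_0}$ for every $K$, and the bound on the composition operator holds at every order. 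By uniqueness, $f\in\C([0,t],\H^{J_0+1,j_0})$. We would then differentiate the mild equation in $s$ using $\partial_sX_{s,r}=-DX_{s,r}\,b(s,\cdot)$ and continuity of $r\mapsto\mathcal{N}_r[f(r)]$, which relies on continuity of $\bar\mu$ and dominated convergence. This shows that $\partial_sf$ exists in $\H^{J_0,j_0}$, is continuous, and satisfies \eqref{eq:T}. Finally, $\C^1([0,t]\times\R^d)$ follows from Proposition~\ref{prop:embedding}, since $\H^{J_0,j_0}\hookrightarrow\C^{1,j_0}$, applied to $f$ and $\partial_sf$.
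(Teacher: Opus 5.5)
Your proposal is correct and follows the paper's proof essentially step by step: the Duhamel formula along the characteristic flow, uniform bounds on $\H^{J_0,j_0}(\R^d)$ for the composition operators and for the nonlocal source term, contraction of an iterate of $\Phi$, and a Gronwall estimate giving linear, Lipschitz dependence on the datum. The one difference is the regularity step, where you re-solve the fixed point in $\H^{J_0+1,j_0}(\R^d)$ and invoke uniqueness; the paper instead notes directly that $\Phi g$ is smooth in $w$ and $\C^1$ in $s$ for every $g$, because $\varphi\circ\phi_s(t,\cdot)$ and the source $Ag(r,\cdot)$ (a $\pi$-average of $\sigma_*(x,\cdot)$) are $\C_b^\infty$ regardless of $g$, so $f=\Phi f$ inherits this at once --- your own remark that $\mathcal{N}_r$ maps into $\H^{K,j_0}$ for every $K$ already gives this shortcut.
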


\begin{remark}
  The proof of Theorem~\ref{thm:pde} indicates that the solution $f^\varphi$ belongs in fact to the space $\C^1\big([0,t], \H^{J,j}(\R^d)\big)$ for any $J \in \N$ provided $j > d/2$. Moreover, for every terminal datum $\varphi\in \H^{J_0,j_0}(\R^d)$, equation~\eqref{eq:T} still admits a unique mild solution $f^\varphi$ in $\C\big([0,t],\H^{J_0,j_0}(\R^d)\big)$ (see indeed the first step in the proof of Theorem~\ref{thm:pde} below).
\end{remark}

The proof follows a standard fixed-point argument based on the Duhamel formula~\eqref{eq:Duhamel} along the characteristic flow of the transport part introduced below.

\subsection{Characteristic flow} Fix $t \ge 0$. For $s\in[0,t]$ and $w\in\R^d$, set
\[
  F(s,w)
  =
  \alpha\,\E_\pi\Big[
  \big(y-\langle \sigma_*(x,\cdot),\bar\mu_s\rangle\big)
  \nabla \sigma_*(x,w)
  \Big]
  \in \R^d,
\]
and, for a function $g \in\C^1\big([0,t]\times \R^d\big)$, define
\[
  Ag(s,w)
  =
  -\alpha\,\E_\pi\Big[
  \langle \nabla g(s)\cdot \nabla \sigma_*(x,\cdot),\bar\mu_s\rangle
  \,\sigma_*(x,w)
  \Big].
\]
Then, Equation~\eqref{eq:T} rewrites as
\begin{equation}
  \label{eq:backward-transport-equation}
  \begin{cases}
    \partial_s f + F\cdot \nabla f + Af = 0
    & \text{in } [0,t]\times\R^d,\\
    f(t)=\varphi & \text{in } \R^d.
  \end{cases}
\end{equation}
For fixed terminal datum $\varphi \in \C_b^\infty(\R^d)$ and $g:[0,t]\times \R^d\to\R$, we also consider the linear transport equation with frozen source term
\begin{equation}
  \label{eq:frozen}
  \begin{cases}
    \partial_s f + F\cdot \nabla f = -Ag
    & \text{in } [0,t]\times\R^d,\\
    f(t)=\varphi & \text{in } \R^d.
  \end{cases}
\end{equation}

Let us now introduce the characteristic flow associated with the transport part of~\eqref{eq:backward-transport-equation}. For any fixed $w\in \R^d$, let $\phi_t(\cdot,w)$ be the solution to the following (non-autonomous) ordinary differential equation in $\R^d$:
\begin{equation}\label{eq:flow}
  \begin{cases}
  \partial_s \psi(s)=F\big(s,\psi(s)\big),\\
  \psi(t)=w.
  \end{cases}
\end{equation}
Since $F$ is globally Lipschitz on $\R_+ \times \R^d$ (this follows easily from the estimates in the proof of~\cite[Proposition~19]{descours-guillin-michel-nectoux-2024}), the flow is well defined globally on $[0,t]$ for every $w \in \R^d$, and satisfies
\[
  \forall s\in[0,t],\quad
  \phi_t(s,w) = w - \int_s^t F_r\big(\phi_t(r,w)\big)\,dr.  
\]
Note however that we cannot \emph{a priori} define this equation for $s < 0$ as the function $F(s)$ is not defined for negative times. 

We collect below some properties of the flow $\phi$ that will be needed later on. For any $t_1, t_2, t_3\ge 0$, by uniqueness of the Cauchy problem on $\R^d$
\[
  \begin{cases}
  \partial_s \psi(s)=F\big(s,\psi(s)\big),\\
  \psi(t_2)=\phi_{t_1}(t_2,w),
  \end{cases}
\]
the flow satisfies the composition rule
\begin{equation}\label{eq:times}
  \phi_{t_1}(t_3, w) = \phi_{t_2}(t_3, \phi_{t_1}(t_2, w)).
\end{equation}
In particular, for every $s\in[0,t]$, the map
$w\mapsto \phi_t(s,w)$ is a homeomorphism of $\R^d$, with inverse $w\mapsto \phi_s(t,w)$, that is,
\begin{equation}\label{eq:inverse-flow}
  \phi_t(s,\cdot)^{-1}=\phi_s(t,\cdot).
\end{equation}
Moreover, by Theorem~\ref{thm:LLN} and Assumption~\ref{assump:A1}, the advection field $(s,w)\mapsto F(s,w)$ belongs to $\C^1([0,t]\times\R^d)$ and is $\C^\infty$ with respect to $w$. Hence the map $(t,s,w)\mapsto\phi_t(s,w)$ is of class $\C^1$ (see, e.g.,~\cite[Chapter~5, Section~3, Theorem~3.1]{hartman-2002}), and
\begin{equation}\label{eq:Jacobian}
  \partial_t\phi_t(s,w)
  =
  -J_w\phi_t(s,w)F(t,w),
\end{equation}
where $J_w\phi_t(s,w)$ denotes the Jacobian matrix of
$w\mapsto\phi_t(s,w)$.
Furthermore, by~\cite[Chapter~5, Section~3, Theorem~4.1]{hartman-2002}, $w\mapsto\phi_t(s,w)$ is $\C^\infty$ and
$(t,s,w)\mapsto\partial_w^m\phi_t(s,w)$ is continuous for every $m\in\N^d$.
Here the subscript $w$ emphasizes that $\partial_w^m$ denotes differentiation only with respect to the spatial variable $w \in \R^d$.
Since all spatial derivatives of $F$ are uniformly bounded (by~\ref{assump:A1}), it follows by induction that for every $m\in\N^d$ such that $|m|>0$,
\begin{equation}\label{eq:partial-w}
  \sup_{s,r\in[0,t]}\sup_{w\in\R^d}
  |\partial_w^m\phi_s(r,w)|<\infty.
\end{equation}
Finally, since $F$ is bounded, identity~\eqref{eq:Jacobian} and the first-order bounds of~\eqref{eq:partial-w} give
\begin{equation}\label{eq:partial-t}
  \sup_{s,r\in[0,t]}\sup_{w\in\R^d}
  |\partial_s\phi_s(r,w)|<\infty.  
\end{equation}

\subsection{Duhamel formula}
\label{sec:Duhamel}
Fix again $t \ge 0$ and let $g\in\C^1\big([0,t]\times \R^d\big)$. Any sufficiently smooth solution (say, e.g., with regularity $\C^1\big([0,t] \times \R^d\big)$) to
\eqref{eq:frozen} satisfies along characteristics, for every $s \in [0,t]$ and $w \in \R^d$,
\[
  \begin{cases}
    \dfrac{d}{ds}\Big[f\big(s,\phi_t(s,w)\big)\Big]
    =
    -Ag\big(s,\phi_t(s,w)\big), \\[0.8em]
    f\big(t,\phi_t(t,w)\big) =\varphi(w).
  \end{cases}
\]
Integrating over $[s,t]$ yields
\[
  f(s, \phi_t(s,w)) = \varphi(w) + \int_s^t Ag\big(r,\phi_t(r,w)\big)\,dr.
\]
Recall that the inverse of the mapping $w \mapsto \phi_t(s,w)$ is $w \mapsto \phi_s(t,w)$ (see~\eqref{eq:inverse-flow}). Using also $\phi_t(r, \phi_s(t,w)) = \phi_s(r,w)$ (by~\eqref{eq:times}), we obtain
\begin{equation}\label{eq:Duhamel}
  f(s,w)
  =
  \varphi\big(\phi_s(t,w)\big)
  +
  \int_s^t Ag\big(r,\phi_s(r,w)\big)\,dr.
\end{equation}
This is the Duhamel formula for sufficiently smooth solutions associated with the transport part of \eqref{eq:backward-transport-equation}. 

\begin{definition}\label{def:mild-T}
Let $t\ge0$ and let $\varphi\in\H^{J_0,j_0}(\R^d)$. We say that
$f\in\C\big([0,t],\H^{J_0,j_0}(\R^d)\big)$ is a \emph{mild solution}
of~\eqref{eq:T} on $[0,t]$ with terminal datum $\varphi$ if, for every
$s\in[0,t]$,
\begin{equation}\label{eq:mild-T}
  f(s,\cdot)
  =
  \varphi\big(\phi_s(t,\cdot)\big)
  +
  \int_s^t
  Af\big(r,\phi_s(r,\cdot)\big)\,dr
  \quad\text{in }\H^{J_0,j_0}(\R^d).
\end{equation}
\end{definition}

The preceding definition is meaningful in the Sobolev sense for the following reasons. 
First, recall by Proposition~\ref{prop:embedding} that every element of $\H^{J_0,j_0}(\R^d)$ can be identified with its unique representative in $\C^{1,j_0}(\R^d)$. In particular, if $g\in \C([0,t],\H^{J_0,j_0}(\R^d))$, then $g(s,\cdot)$ and
$\nabla g(s,\cdot)$ are well-defined pointwise through this representative.
Moreover, since $\bar\mu_s\in \mathcal{P}_\gamma(\R^d)$ by Theorem~\ref{thm:LLN}, $j_0\le \gamma$ (see~\eqref{eq:exponents}), and $\sigma_*(x,\cdot)\in \C_b^\infty(\R^d)$ by~\ref{assump:A1}, the quantity $\big\langle \nabla g(s)\cdot \nabla \sigma_*(x,\cdot),\bar\mu_s\big\rangle$ is well defined for every $s\in[0,t]$ and every $x\in\X$. Hence the expression $Ag(s,\cdot)$ and the Duhamel formula~\eqref{eq:Duhamel} are meaningful for Sobolev-valued functions $g$. The estimates proved in Lemma~\ref{lem:regularity} below show in particular that the two terms on the right-hand side of~\eqref{eq:Duhamel} belong to $\H^{J_0,j_0}(\R^d)$, and Step~1 in the proof of Theorem~\ref{thm:pde} shows that the right-hand side defines an element of
$\C\big([0,t],\H^{J_0,j_0}(\R^d)\big)$.
Thus, solving the backward equation~\eqref{eq:T} in $\C\big([0,t],\H^{J_0,j_0}(\R^d)\big)$ amounts to finding a fixed point, in this space, of the map $\Phi$ suggested by~\eqref{eq:Duhamel} and introduced in~\eqref{eq:Phi} below. 

When the terminal datum is smooth, this solution will be shown in Step~3 of the proof of Theorem~\ref{thm:pde} to belong to 
\[
  \C^1\big([0,t],\H^{J_0,j_0}(\R^d)\big) \cap \C^1\big([0,t]\times\R^d\big)
\]
and therefore to solve the backward equation pointwise. 

The following lemma provides the two stability estimates needed to turn~\eqref{eq:Duhamel} into a well-defined fixed-point problem on $\C\big([0,t],\H^{J_0,j_0}(\R^d)\big)$.

\begin{lemma}\label{lem:regularity}
There is a constant $C >0$ (depending only on the final time $t \ge 0$) such that:
\begin{enumerate}[label={\upshape(\roman*)}]
    \item\label{lem:regularityi} For every $h\in \H^{J_0,j_0}(\R^d)$ and every $0\le s\le r\le t$,
    \begin{equation}
    \label{eq:composition}
      \|h\circ \phi_s(r,\cdot)\|_{\H^{J_0,j_0}}
      \le C\,\|h\|_{\H^{J_0,j_0}}.
    \end{equation}

    \item\label{lem:regularityii} For every $g\in \C([0,t],\H^{J_0,j_0}(\R^d))$ and every $s\in[0,t]$,
    \begin{equation}
    \label{eq:A}
      \|Ag(s,\cdot)\|_{\H^{J_0,j_0}}
      \le C\,\|g(s)\|_{\H^{J_0,j_0}}.
    \end{equation}
\end{enumerate}
\end{lemma}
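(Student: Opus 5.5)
For \ref{lem:regularityi}, I will first prove the estimate for $h\in\C_c^\infty(\R^d)$ and then extend it by density, using that composition with a fixed $\C^1$ diffeomorphism is continuous. Write $\Theta=\phi_s(r,\cdot)$. For $|m|\le J_0$, the Faà di Bruno formula expresses $\partial^m(h\circ\Theta)$ as a finite sum of terms of the form
\[
(\partial^{k}h)\circ\Theta\cdot\prod_{\ell}\partial^{m_\ell}\Theta^{(i_\ell)},
\]
with $1\le|k|\le|m|$ and $|m_\ell|\ge1$. By~\eqref{eq:partial-w}, every factor $\partial^{m_\ell}\Theta$ is bounded uniformly in $s,r\in[0,t]$ and $w$. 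The case $m=0$ contributes just $h\circ\Theta$. It therefore suffices to bound, for each $|k|\le J_0$,
\[
\int_{\R^d}\frac{|\partial^k h(\Theta(w))|^2}{1+|w|^{2j_0}}\,dw .
\]
I will change variables $w'=\Theta(w)$, so that $w=\phi_r(s,w')$ by~\eqref{eq:inverse-flow}. The Jacobian of the inverse map $\phi_r(s,\cdot)$ is bounded by~\eqref{eq:partial-w}, since the determinant is a polynomial in first derivatives. For the weight, $F$ is bounded, so $|\phi_r(s,w')-w'|\le t\|F\|_\infty$. This gives $1+|w'|^{2j_0}\le C(1+|w|^{2j_0})$, so the weight changes only by a constant factor. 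Hence the integral is at most $C\|h\|^2_{\H^{J_0,j_0}}$, with a constant uniform in $s,r$.

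For the density step, take $h_n\in\C_c^\infty$ with $h_n\to h$. The estimate shows that $h_n\circ\Theta$ is Cauchy in $\H^{J_0,j_0}$. Its limit is $h\circ\Theta$, identified through the continuous representative of Proposition~\ref{prop:embedding}, because convergence in $\H^{J_0,j_0}$ implies locally uniform convergence.

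For \ref{lem:regularityii}, I will write
\[
Ag(s,w)=-\alpha\,\E_\pi\big[c_s(x)\,\sigma_*(x,w)\big],\qquad c_s(x)=\big\langle\nabla g(s)\cdot\nabla\sigma_*(x,\cdot),\bar\mu_s\big\rangle .
\]
By Proposition~\ref{prop:embedding}, $\partial_i g(s)\in\H^{J_0-1,j_0}\hookrightarrow\C^{0,j_0}$, so $|\partial_ig(s)(w)|\le C\|g(s)\|_{\H^{J_0,j_0}}(1+|w|^{j_0})$. Since $\nabla\sigma_*$ is bounded and $\sup_{s\le t}\langle 1+|w|^{j_0},\bar\mu_s\rangle<\infty$ (because $\bar\mu\in\C(\R_+,\mathcal P_\gamma)$ and $j_0\le\gamma$), this yields $\sup_x|c_s(x)|\le C\|g(s)\|_{\H^{J_0,j_0}}$.

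Then $\partial^m_w Ag(s,w)=-\alpha\,\E_\pi[c_s(x)\,\partial^m_w\sigma_*(x,w)]$, where differentiation under the expectation is justified by~\ref{assump:A1}. This is bounded by $C\|g(s)\|$ uniformly in $w$. Since $\int(1+|w|^{2j_0})^{-1}dw<\infty$ because $2j_0>d$, we get $\|Ag(s)\|_{\H^{J_0,j_0}}\le C\|g(s)\|_{\H^{J_0,j_0}}$. Smooth bounded functions lie in $\H^{J_0,j_0}$ by the remark in Section~\ref{sec:Sobolev}, so this is a genuine element of the space.

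The main obstacle is \ref{lem:regularityi}. It requires that every constant be uniform in $s,r\in[0,t]$, that the Jacobian of the inverse flow be controlled, and that the polynomial weight be compared correctly under the flow. The bounded-displacement argument above is what handles the weight.
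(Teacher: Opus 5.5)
Your proposal is correct and follows the paper's proof essentially step by step: Faà di Bruno with the uniform flow-derivative bounds~\eqref{eq:partial-w}, the change of variables through the inverse flow with bounded Jacobian, and for part~(ii) the embedding into $\C^{1,j_0}$, the uniform moment bound on $\bar\mu_s$ and the integrability of $(1+|w|^{2j_0})^{-1}$. The differences are minor: you compare the weights through the bounded displacement $|\phi_r(s,w')-w'|\le t\|F\|_\infty$ rather than the paper's linear-growth bound $|\phi_a(b,z)|\le C(1+|z|)$, which works equally well and is slightly simpler, and you spell out the density/identification step more carefully than the paper's one-line ``by density.''
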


\begin{proof}
\ref{lem:regularityi}. By density, it is enough to consider
$h\in\C_c^\infty(\R^d)$. Fix $0 \le s \le r \le t$ and let $m\in\N^d$ with $|m|\le J_0$. By the chain rule, $\partial_w^m(h\circ\phi_s(r,\cdot))$ is a finite sum of terms of
the form
\[
  \big(\partial^\ell h\big)(\phi_s(r,\cdot))
  \prod_{i=1}^{|\ell|}
  \partial_w^{m_i}\phi_s(r,\cdot),
\]
where $1\le |\ell|\le |m|$, $m_1,\ldots,m_{|\ell|}\neq 0$, and
$|m_1|+\cdots+|m_{|\ell|}|=|m|$, together with the trivial term
$h\circ\phi_s(r,\cdot)$ when $m=0$. By~\eqref{eq:partial-w}, all products of derivatives of the flow are bounded uniformly for $0 \le s \le r \le t$. Hence there exists a constant $C > 0$, depending only on the final time $t$, such that
\begin{equation}\label{eq:chain-rule}
   \int_{\R^d}
 \frac{\big|\partial_w^m(h\circ\phi_s(r,w))\big|^2}
 {1+|w|^{2j_0}}\,dw
 \le
 C
 \sum_{|\ell|\le |m|}
 \int_{\R^d}
 \frac{\big|(\partial_w^\ell h)(\phi_s(r,w))\big|^2}
 {1+|w|^{2j_0}}\,dw.
\end{equation}
Since $\phi_s(r,\cdot)$ is a $\C^1$-diffeomorphism with inverse $\phi_r(s,\cdot)$ (see~\eqref{eq:inverse-flow}), the change of variables $u=\phi_s(r,w)$ gives for every $|\ell| \le |m|$,
\begin{equation}\label{eq:change-of-variables}
    \int_{\mathbb R^d}
  \frac{\left|(\partial^\ell h)(\phi_s(r,w))\right|^2}
       {1+|w|^{2j_0}}\,dw
  =
  \int_{\mathbb R^d}
  \frac{|\partial^\ell h(u)|^2}
       {1+|\phi_r(s,u)|^{2j_0}}
  \left|\det J_u\phi_r(s,u)\right|\,du.
\end{equation}
By the first-order bounds in~\eqref{eq:partial-w} together with the inverse-flow formula~\eqref{eq:inverse-flow}, the determinant $\left|\det J_u\phi_r(s,u)\right|$ is uniformly bounded for $0\le s\le r\le t$ and $u\in\R^d$. Using also the continuity of $(a,b)\mapsto\phi_a(b,0)$ on the compact set $\{0\le a\le b\le t\}$, we have for every $0\le a\le b\le t$ and $z\in\R^d$,
\begin{equation}\label{eq:growth}
  |\phi_a(b,z)|
  \le  |\phi_a(b,0)|
     + \sup_{0\le a\le b\le t} \|J_z\phi_{a}(b,\cdot)\|_\infty |z|
  \le C(1+|z|),  
\end{equation}
where the constant $C>0$ may change from line to line and depends only on the final time $t$. Applying~\eqref{eq:growth} with $a = s$, $b=r$, $z=\phi_r(s,u)$, and using again~\eqref{eq:inverse-flow}, we get
\[
  |u|
  =
  |\phi_s(r,\phi_r(s,u))|
  \le C\bigl(1+|\phi_r(s,u)|\bigr).
\]
Combining this with~\eqref{eq:chain-rule} and~\eqref{eq:change-of-variables}, and summing over all $|m|\le J_0$ yields
\[
  \begin{aligned}
    \|h\circ \phi_s(r,\cdot)\|^2_{\H^{J_0,j_0}}
    &=
    \sum_{|m| \le J_0} \int_{\R^d}
    \frac{\left|\partial_w^m(h\circ\phi_s(r,w))\right|^2}
        {1+|w|^{2j_0}}\,dw\\
    &\le
    C
    \sum_{|m| \le J_0}
    \sum_{|\ell|\le |m|}
    \int_{\R^d}
    \frac{|\partial^\ell h(u)|^2}{1+|u|^{2j_0}}\,du
    =C\,\|h\|^2_{\H^{J_0,j_0}}.
  \end{aligned}
\]
This proves~\eqref{eq:composition}.

\medskip
\noindent
\ref{lem:regularityii}. Let
$g\in\C([0,t],\H^{J_0,j_0}(\R^d))$ and fix $s\in[0,t]$. 
For every multi-index $m\in\N^d$ with $|m|\le J_0$, 
\[
  \partial_w^m Ag(s,w)
  =
  -\alpha\,\E_\pi\Big[
  \big\langle \nabla g(s)\cdot\nabla\sigma_*(x,\cdot),\bar\mu_s\big\rangle
  \partial_w^m\sigma_*(x,w)
  \Big].
\]
By the embedding $\H^{J_0,j_0}(\R^d)\hookrightarrow \C^{1,j_0}(\R^d)$ from Proposition~\ref{prop:embedding}, for all $u\in\R^d$, we have
\[
  |\nabla g(s,u)|
  \le C\|g(s)\|_{\H^{J_0,j_0}}(1+|u|^{j_0}).
\]
Since $j_0\le\gamma$, Theorem~\ref{thm:LLN} ensures that
\begin{equation}\label{eq:moments}
  \sup_{s\in[0,t]}\langle 1+|\cdot|^{j_0},\bar\mu_s\rangle<\infty.
\end{equation}
Since $\partial_w^m\sigma_*$ is uniformly bounded over $\X \times \R^d$ for every $|m|\le J_0$ (by~\ref{assump:A1}), the above estimates imply that
\[
  |\partial_w^m Ag(s,w)|
  \le
  C\,\|g(s)\|_{\H^{J_0,j_0}},
\]
for some constant $C > 0$.
Moreover, as $j_0>d/2$, we have
\[
  \int_{\R^d}\frac{dw}{1+|w|^{2j_0}}<\infty.
\]
Hence, after possibly changing the value of $C$,
\[
  \|Ag(s,\cdot)\|^2_{\H^{J_0,j_0}}
  =
  \sum_{|m|\le J_0}
  \int_{\R^d}
  \frac{|\partial_w^m Ag(s,w)|^2}{1+|w|^{2j_0}}\,dw
  \le C\|g(s)\|_{\H^{J_0,j_0}}^2,
\]
which proves~\eqref{eq:A}.
\end{proof}

\subsection{Proof of Theorem~\ref{thm:pde}}

Fix $t\ge 0$, and let $\varphi\in \H^{J_0,j_0}(\R^d)$. For
$g\in \C\big([0,t],\H^{J_0,j_0}(\R^d)\big)$ and $s\in[0,t]$, set
\begin{equation}\label{eq:Phi}
  \Phi g(s,\cdot)
  =
  \varphi\big(\phi_s(t,\cdot)\big)
  +
  \int_s^t
  Ag\big(r,\phi_s(r,\cdot)\big)\,dr.
\end{equation}
By the Duhamel formulation~\eqref{eq:Duhamel}, solving~\eqref{eq:T} (in the sense of Definition~\ref{def:mild-T}) in $\C\big([0,t],\H^{J_0,j_0}(\R^d)\big)$ is equivalent to finding a fixed point of $\Phi$. We consider the sup norm on $\C\big([0,t],\H^{J_0,j_0}(\R^d)\big)$, given by
\[
  \|g\|_{\C([0,t],\H^{J_0,j_0}(\R^d))}
  =
  \sup_{s\in[0,t]}\|g(s)\|_{\H^{J_0,j_0}}.
\]

\medskip
\noindent
\emph{Step 1.}
Let us first check that $\Phi$ maps $\C\big([0,t],\H^{J_0,j_0}(\R^d)\big)$ into itself. Let us emphasize that, throughout this step, the terminal datum $\varphi$ is only assumed to belong to $\H^{J_0,j_0}(\R^d)$. By Lemma~\ref{lem:regularity}, there exists a constant $C>0$ depending only on $t$ such that, for every $s\in[0,t]$,
\[
  \|\Phi g(s)\|_{\H^{J_0,j_0}}
  \le
  C\|\varphi\|_{\H^{J_0,j_0}}
  + C^2\int_s^t \|g(r)\|_{\H^{J_0,j_0}}\,dr.
\]
In particular, up to renaming $C$,
\[
  \sup_{s\in[0,t]}\|\Phi g(s)\|_{\H^{J_0,j_0}}
  \le
  C\|\varphi\|_{\H^{J_0,j_0}}
  + C \sup_{s\in[0,t]}\|g(s)\|_{\H^{J_0,j_0}}.
\]
It remains to check continuity in the time variable. 
We first check that if
$h_n\to h$ in $\H^{J_0,j_0}(\R^d)$ and
$(s_n,t_n)\to(s, t)$ in $[0,t]^2$, then
\begin{equation}\label{eq:continuity}
  h_n\circ \phi_{s_n}(t_n,\cdot)
  \longrightarrow
  h\circ \phi_s(t,\cdot)
  \quad\text{in }\H^{J_0,j_0}(\R^d).
\end{equation}
Indeed, on the one hand, by Lemma~\ref{lem:regularity}~\ref{lem:regularityi}, 
\[
  \|(h_n - h) \circ \phi_{s_n}(t_n, \cdot)\|_{\mathcal{H}^{J_0,j_0}} \le C\,\|h_n - h\|_{\mathcal{H}^{J_0,j_0}},
\]
for some constant $C > 0$. This term vanishes as $n \to \infty$ since $h_n \to h$ in $\H^{J_0,j_0}(\R^d)$. On the other hand, for any $h^* \in \C_c^\infty(\R^d)$, 
\[
  \|(h_n - h^*) \circ \phi_{s_n}(t_n, \cdot)\|_{\H^{J_0,j_0}} \le \|h_n - h^*\|_{\H^{J_0,j_0}},
\]
and, by the dominated convergence theorem,
\[
  \|h^* \circ \phi_{s_n}(t_n, \cdot) - h^* \circ \phi_{s}(t, \cdot)\|_{\H^{J_0,j_0}} \longrightarrow 0.
\]
The convergence~\eqref{eq:continuity} follows by density of $\C_c^\infty(\R^d)$ in $\H^{J_0,j_0}(\R^d)$.
Applying~\eqref{eq:continuity} with $h_n=h=\varphi$ shows that
$s\mapsto \varphi\circ\phi_s(t,\cdot)$ is continuous in
$\H^{J_0,j_0}(\R^d)$. 
We next note that the map
\[
  r \in [0,t] \longmapsto Ag(r,\cdot) \in \H^{J_0,j_0}(\R^d)
\]
is continuous. Indeed, since $g \in \C\big([0,t], \H^{J_0,j_0}(\R^d)\big)$, $\H^{J_0,j_0}(\R^d)\hookrightarrow \C^{1,j_0}(\R^d)$ from Proposition~\ref{prop:embedding} and the regularity of $\bar\mu$ from Theorem~\ref{thm:LLN},~\eqref{eq:moments}, and~\ref{assump:A1}, we have
for every $m \in \N^d$ and every $w\in\R^d$,
\[
  \partial_w^m Ag(r_n,w)\longrightarrow \partial_w^m Ag(r,w),
\]
and then, since $j_0 > d/2$,
\[
  Ag(r_n,\cdot)\longrightarrow Ag(r,\cdot)
  \quad\text{in }\H^{J_0,j_0}(\R^d).
\]
Combining this with~\eqref{eq:continuity} shows that $(s,r)\longmapsto Ag(r,\cdot)\circ\phi_s(r,\cdot)$ is continuous from the compact set $\{0\le s\le r\le t\}$ into $\H^{J_0,j_0}(\R^d)$. Hence
\[
  s \in [0,t] \longmapsto
  \int_s^t Ag(r,\cdot)\circ\phi_s(r,\cdot)\,dr
\]
is continuous in $\H^{J_0,j_0}(\R^d)$.
Hence $\Phi g\in \C\big([0,t],\H^{J_0,j_0}(\R^d)\big)$.

\medskip
\noindent
\emph{Step 2.}
We now prove that $\Phi$ has a unique fixed point in $\C\big([0,t],\H^{J_0,j_0}(\R^d)\big)$ by applying the contraction principle to a sufficiently high iterate of $\Phi$. Indeed, by linearity of $A$ and Lemma~\ref{lem:regularity}~\ref{lem:regularityii}, there exists a constant $C>0$ (depending only on $t$) such that, for every $g,h\in \C\big([0,t],\H^{J_0,j_0}(\R^d)\big)$ and $s\in[0,t]$, 
\begin{equation}\label{eq:contraction}
  \begin{aligned}
  \|\Phi g(s)-\Phi h(s)\|_{\H^{J_0,j_0}}
  &\le
  C\int_s^t \|g(r)-h(r)\|_{\H^{J_0,j_0}}\,dr.
\end{aligned}
\end{equation}
Iterating~\eqref{eq:contraction} thus yields, for every $k\ge1$,
\[
  \sup_{s\in[0,t]}
  \|\Phi^k g(s)-\Phi^k h(s)\|_{\H^{J_0,j_0}}
  \le
  \frac{(Ct)^k}{k!}\,
  \sup_{s\in[0,t]}\|g(s)-h(s)\|_{\H^{J_0,j_0}}.
\]
Hence, for $k$ large enough, $\Phi^k$ is a contraction on
$\C\big([0,t],\H^{J_0,j_0}(\R^d)\big)$, which implies that $\Phi$ has a unique fixed point in $\C\big([0,t],\H^{J_0,j_0}(\R^d)\big)$, denoted by $f^\varphi$.

\medskip
\noindent
\emph{Step 3.} 
In this step, we show that when the terminal datum $\varphi$ is smooth, 
the (unique) solution $f$ to $\Phi f=f$ in $\C([0,t],\H^{J_0,j_0}(\R^d))$
belongs in fact to 
\[
  \C^1\big([0,t],\H^{J_0,j_0}(\R^d)\big) \cap \C^1\big([0,t]\times\R^d\big).
\]
To this end, assume now that $\varphi\in\C_b^\infty(\R^d)$. 
Since, for every $r\in[0,t]$ and $w\in\R^d$, the map
$s\mapsto \phi_s(r,w)$ is of class $\C^1$, and since both the activation function
$\sigma_*$ and the terminal datum $\varphi$ are smooth, it follows from
the definition of $\Phi$ that, for every $w\in\R^d$, the map
\[
  s \in [0,t] \longmapsto
  \Phi g(s,w)
  =
  \varphi\big(\phi_s(t,w)\big)
  +
  \int_s^t
  Ag\big(r,\phi_s(r,w)\big)\,dr
\]
is of class $\C^1$. More precisely, using the regularity of the flow, the
boundedness of its derivatives~\eqref{eq:partial-w}--\eqref{eq:partial-t},
and Assumption~\ref{assump:A1}, we have $\Phi g\in \C^1\big([0,t]\times\R^d\big)$. Combined with~\eqref{eq:partial-t} and the boundedness of all derivatives of $\varphi$ and $\sigma_*$, one easily deduces that the function $\partial_s\Phi g \in \C\big([0,t],\H^{J_0,j_0}(\R^d)\big)$.
Consequently, for every
$g\in\C\big([0,t],\H^{J_0,j_0}(\R^d)\big)$, we have
\[
  \Phi g
  \in
  \C^1\big([0,t],\H^{J_0,j_0}(\R^d)\big)
  \cap
  \C^1\big([0,t]\times\R^d\big).
\]

\medskip
\noindent
\emph{Step 4.} It remains to prove the desired  continuity with respect to the terminal datum. 
Using the fixed-point formulation~\eqref{eq:Phi} for two terminal data $\varphi,\psi\in\H^{J_0,j_0}(\R^d)$ together with Lemma~\ref{lem:regularity}, we obtain, for every $s\in[0,t]$,
\[
  \|f^\varphi(s)-f^\psi(s)\|_{\H^{J_0,j_0}}
  \le
  C\,\|\varphi-\psi\|_{\H^{J_0,j_0}}
  + C\int_s^t
  \|f^\varphi(r)-f^\psi(r)\|_{\H^{J_0,j_0}}\,dr.
\]
By Gronwall's lemma, it follows that for all $t\ge0$, 
\[
  \sup_{s\in[0,t]}
  \|f^\varphi(s)-f^\psi(s)\|_{\H^{J_0,j_0}}
  \le
  C\|\varphi-\psi\|_{\H^{J_0,j_0}}.
\]
Thus the solution map $\varphi\in\H^{J_0,j_0}(\R^d)
\mapsto f^\varphi\in \C\big([0,t],\H^{J_0,j_0}(\R^d)\big)$
is linear and Lipschitz-continuous.
The proof of Theorem~\ref{thm:pde} is complete. \qed

\section{Numerical experiments}
\label{sec:numerics}

In this section, we illustrate the variance formula of Theorem~\ref{thm:main} on the simple one-dimensional regression problem taken from~\cite{mei-montanari-nguyen-2018}:
\begin{equation}\label{eq:model}
  y = x^3 + \varepsilon,
 \qquad
 x \sim \mathcal{U}([-1,1]), \
 \qquad
 \varepsilon \sim \mathcal{N}(0,\sigma_\varepsilon^2),
\end{equation}
where $\varepsilon$ is independent of $x$ and the noise level is chosen as $\sigma_\varepsilon = 0.05$.
We specialize the two-layer model~\eqref{eq:network} to the case $d = 2$ by writing $\theta^i = (w_i,b_i)\in \R^2$ and $\sigma_*(x, \theta^i)=\tanh(w_i x+b_i)$, $i \in \{1, \ldots, N\}$, so that
\[
 \hat{y}_{\theta}(x) = \frac1N\sum_{i = 1}^N \tanh(w_i x + b_i).
\]

Fix a prediction point $x\in[-1, 1]$. For $(w,b) \in \R^2$, let 
\[
  \varphi_{x}(w, b) = \sigma_*\big(x,(w, b)\big) = \tanh(wx+b).
\]
By Theorems~\ref{thm:LLN} and~\ref{thm:CLT} recalled in Section~\ref{sec:LLN-CLT}, at training time $\lfloor Nt\rfloor$, $t \ge 0$, we have
\begin{equation}\label{eq:fluctuations}
  \hat{y}_{\theta_{\lfloor Nt\rfloor}}(x)
  =
 \langle \varphi_x,\mu_t^N \rangle
 \approx
 \langle \varphi_x,\bar\mu_t \rangle
 +
 \frac1{\sqrt{N}} \langle \varphi_x, \eta_t^* \rangle.
\end{equation}
Hence, the fluctuations of the network output around its mean-field limit
are of order $N^{-1/2}$, and the corresponding non-degenerate finite-width
variance scale is
\begin{equation}\label{eq:finite-width-variance}
 V_N(x)
 =
 N\,\Var\big(\hat{y}_{\theta_{\lfloor Nt\rfloor}}(x)\big).
\end{equation}
In view of the approximation~\eqref{eq:fluctuations}, the corresponding large-width limiting output variance is
\[
 V^*(x)
 =
 \Var\big(\langle \varphi_x, \eta_t^* \rangle\big).
\]
By Theorem~\ref{thm:main}, this quantity is given by the variance formula~\eqref{eq:variance}, where $f^{\varphi_x}$ is obtained by solving the backward equation~\eqref{eq:T} on $[0,t]$ with terminal datum $\varphi_x$. Accordingly, for each prediction point $x \in [-1, 1]$, we compare a Monte Carlo (or \emph{deep ensemble}) estimator $\widehat{V}(x)$ of $V_N(x)$, used as the finite-$N$ benchmark, with the PDE-based approximation $\widehat{V}^*(x)$ of its large-width limit $V^*(x)$.

\subsection{Monte Carlo reference}

As a finite-width benchmark, we estimate the quantity
$x \mapsto V_N(x)$ directly using a Monte Carlo simulation. More precisely, we simulate $M = 20{,}000$ neural networks, with initial parameters $\theta_0^i$ sampled independently from $\mathcal{N}(0,0.01\,I_2)$ and trained independently with noiseless mini-batch~\eqref{eq:SGD} under the same hyperparameters ($N = 2000$ neurons, mini-batch size $B \equiv 20$, learning rate $\alpha = 600$) up to time $t = 10$ (that is, for $\lfloor Nt \rfloor = 20{,}000$ iterations). For each set of trained parameters 
\[
  \theta^{(m)} = \bigl(\theta_{\lfloor Nt \rfloor}^{1,m},\ldots,\theta_{\lfloor Nt \rfloor}^{N,m}\bigr) \in \R^{2N},
  \qquad m \in \{1, \ldots, M\},
\]
we evaluate the output $\hat{y}^{(m)}(x) = \hat{y}_{\theta^{(m)}}(x)$ on a fixed grid of $200$ input points $x\in[-1, 1]$. This provides, at each input, an empirical (unbiased) estimator of $V_N(x)$, given by
\begin{equation}\label{eq:VMC}
  \widehat{V}(x) = \frac{N}{M-1} \sum_{m=1}^M \bigg(\hat{y}^{(m)}(x) - \frac1M \sum_{\ell=1}^M \hat{y}^{(\ell)}(x)\bigg)^2.
\end{equation}

\begin{remark}
  To aggregate the $M$ independent Monte Carlo runs efficiently and allow for parallel implementations, we accumulate the empirical variance of $\hat{y}^{(m)}(x)$, $m \in \{1, \ldots, M\}$ at each prediction point $x$ using Welford's online algorithm~\cite{welford-1962}; 
  the corresponding update formulas and the estimation of the relative Monte Carlo error are detailed in Appendix~\ref{app:Monte-Carlo}. This method avoids storing the full collection of trained network outputs while still producing the \emph{exact} sample variance used in the comparison. With $M = 20{,}000$ independent trainings, the empirical relative error of the Monte Carlo variance estimator (see Appendix~\ref{app:Monte-Carlo}) is reduced to less than $1\%$. Therefore, the corresponding error bars are omitted in Figure~\ref{fig:variance} for readability.
\end{remark}

\subsection{PDE-based variance estimator}

Although the main text is stated for \emph{online} SGD (i.e., for batch size
one), the numerical experiment conducted here is performed in the mini-batch setting with constant mini-batch size $B\equiv20$. Adapting the results of Theorem~\ref{thm:main} to this setting is straightforward from the mini-batch statement of Theorem~\ref{thm:CLT} in~\cite{descours-guillin-michel-nectoux-2024}. The corresponding variance formula is given by a simple modification of~\eqref{eq:variance}, namely, for the output test function $\varphi_x=\sigma_*(x,\cdot)$, 
\[
 V^*(x)
 =
 V_0(x)+\frac{\alpha^2}{B} I(x)
\]
where
\[
  V_0(x)=\Var\big(f^{\varphi_x}(0,\theta_0)\big),
  \quad
  I(x) = \int_0^t \Var_\pi\bigl(Q_s[f^{\varphi_x}(s)]\bigr)\, ds,
\]
and $f^{\varphi_x}$ solves the backward transport equation~\eqref{eq:T} on $[0,t]$ with terminal datum
$\varphi_x$.

In the implementation, the backward solve and the accumulation of the time integral $I(x)$ are performed simultaneously, yielding the numerical estimator
\[
 \widehat{V}^*(x) = \widehat V_0(x)+\frac{\alpha^2}{B}\, \widehat I(x).
\]
Here $\widehat I(x)$ is obtained by summing
the discretized values of $\Var_\pi(Q_s[f^{\varphi_x}(s)])$ along the backward time loop, while $\widehat V_0(x)$
is computed separately as the empirical variance of $f^{\varphi_x}(0, \theta_0)$ over i.i.d.\ samples $\theta_0\sim\mathcal{N}(0, 0.01\, I_2)$.

The backward transport equation~\eqref{eq:T} is discretized on a uniform $(w,b)$-grid with mesh size $h=0.02$. Time integration on $[0,t]$ is performed backward via a semi-Lagrangian scheme with time step $\Delta s=6\cdot 10^{-4}$, based on the Duhamel representation~\eqref{eq:Duhamel}.
More precisely, for $v=(w,b)\in\R^2$ and $0 \le s \le r \le t$, let $\phi_s(r,v)$ denote the characteristic flow associated with the advection field $F$ in~\eqref{eq:Duhamel}. We denote the foot of the characteristic reaching $v$ at time $s$ as
\[
  v^* = \phi_s(s-\Delta s,v).
\]
The backward semi-Lagrangian update then reads
\[
 f(s-\Delta s,v) \approx f(s,v^*)+\Delta s\,Af(s,v^*),
 \quad
 s = t, t-\Delta s, \ldots, \Delta s.
\]
In practice, $v^*$ is approximated by an RK2 midpoint step:
\[
 v_{\frac12} = v+\frac{\Delta s}{2}\, F(s-\Delta s / 2, v),
 \qquad
 v^*\approx v+\Delta s\, F\big(s-\Delta s/2, v_{\frac12}\big).
\]

To approximate the mean-field trajectory $\bar{\mu}$ entering~\eqref{eq:T}, we use proxy neural-network trajectories generated with the same training horizon $t = 10$, mini-batch size $B \equiv 20$, and learning-rate $\alpha = 600$ as in the Monte Carlo benchmark, but with width $N_{\bar\mu} = 10{,}000$. 
Each proxy network is trained with fresh i.i.d.\ mini-batches drawn from the data model~\eqref{eq:model}, as in the benchmark. 
We then retain the whole empirical-measure trajectory of this single large-width
network as a numerical surrogate for $(\bar\mu_s)_{0\le s\le t}$, thereby
avoiding the direct discretization of the mean-field
equation~\eqref{eq:LLN}.

At each backward time step, the advection field $F$ and the source term $A$ are thus computed from the reconstructed neural-network trajectory. Spatial derivatives of $f$ on the grid are computed by centered finite differences (with Neumann boundary conditions at the edges of the computational domain). The quantities that must be evaluated at the current proxy mean-field particles, namely $\nabla f$, $F$, $A$, and $f$ itself at the footpoints, are obtained from the grid values by bilinear interpolation. For the expectations under $\pi$, we use a single i.i.d.\ sample $x_1, \dots, x_n\sim \mathcal{U}([-1, 1])$ of size $n = 10^4$, drawn once and reused across all runs.

This also clarifies the computational comparison between the direct Monte Carlo
benchmark and the PDE-based method used here.
Training $M$ independent networks of width $N$ up to time $t$ requires $\lfloor N t\rfloor$ SGD steps per network, each costing of order $O(B N)$, and therefore has cost of order $O(B N^2 t M)$. The PDE-based procedure replaces these $M$ independent trainings by one large proxy trajectory, followed by the backward solve. For one terminal datum, if the parameter grid has size $n_w \times n_b$ and the quadrature sample for $\pi$ has size $n_\pi$, the backward part has cost of order $O(n_\pi\,n_w\,n_b\, N_{\bar\mu}\,t)$ on the proxy time grid. 

\begin{remark}
  To reduce the computational cost of the PDE quadrature, we take advantage of the additive structure of the data model~\eqref{eq:model} and separate the deterministic regression signal from the observation noise. Namely, for $s\in[0,t]$ and $x\in[-1,1]$, we set
\[
 r_s(x) = x^3- \langle \sigma_*(x,\cdot), \bar\mu_s \rangle,
\]
and, for the current backward solution $f(s,\cdot)$,
\[
 G_s[f](x) = \bigl\langle \nabla f(s, \cdot)\cdot\nabla \sigma_*(x,\cdot), \bar\mu_s\bigr\rangle.
\]
Then, for $y=x^3+\varepsilon$ as in~\eqref{eq:model},
\[
 Q_s[f] =\bigl(r_s(x)+\varepsilon\bigr)\,G_s[f](x).
\]
Using the independence between $\varepsilon$ and $x$, together with $\E[\varepsilon]=0$, we obtain
\[
 \Var_{\pi}\bigl(Q_s[f]\bigr)
 =
 \Var_\pi\bigl(r_s(x)\,G_s[f](x)\bigr)
 +\sigma_\varepsilon^2\,\mathbb E_\pi\bigl[G_s[f](x)^2\bigr].
\]
This decomposition allows us to use the noiseless
samples $y_j=x_j^3$ in the PDE quadrature, and to add the observation-noise contribution analytically through the second term above when updating $\widehat I(x)$.
\end{remark}

\begin{figure}[ht]
 \centering
 \includegraphics[width=.9\textwidth]{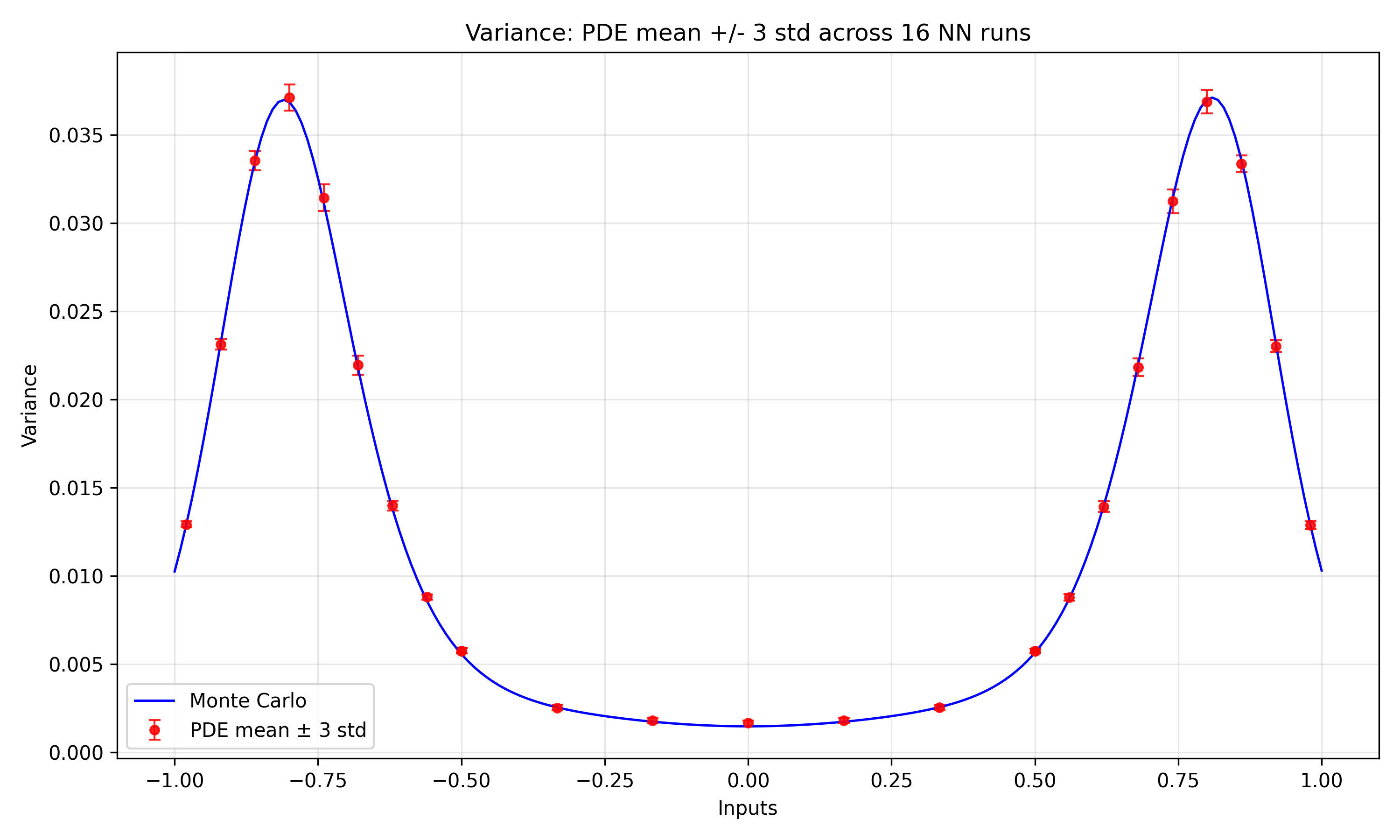}
 \caption{Comparison between the PDE-based approximation of the limiting variance $V^*(x)$ and the direct Monte Carlo estimate of $V_N(x)$. The blue curve is the Monte Carlo finite-width benchmark. The red markers correspond to the empirical mean of the PDE-based estimates over $16$ independent proxy trajectories at 25 non-uniform input points in $[-1,1]$, and the error bars represent $\pm 3$ empirical standard deviations across runs.}
 \label{fig:variance}
\end{figure}

\subsection{Comparison and discussion}

For the comparison shown in Figure~\ref{fig:variance}, the PDE estimator is evaluated on $25$ prediction points in $[-1, 1]$. To assess the sensitivity of the PDE approximation to the random neural-network trajectory used as a proxy for the mean-field dynamics $\bar\mu$, we repeat the whole PDE computation for $16$ independent runs.

Figure~\ref{fig:variance} shows a good qualitative agreement between the PDE-based approximation and the direct Monte Carlo benchmark. Moreover, the empirical dispersion across the $16$ PDE runs remains moderate compared with the scale of the signal, which suggests that the main variance profile is already captured reliably by the backward PDE. The remaining discrepancies are more plausibly attributable to the random proxy trajectory used to approximate the mean-field coefficients than to a systematic bias of the PDE discretization itself.

We stress, however, that the present experiment should be viewed as a first numerical validation of the theory on a simple one-dimensional example. In principle, one could discretize the deterministic mean-field equation~\eqref{eq:LLN} for $\bar\mu$, and then use this approximation to solve the associated backward equation~\eqref{eq:T}. If such a coupled deterministic procedure could be implemented accurately and efficiently, the variance formula~\eqref{eq:variance} derived here could provide the basis for an asymptotic uncertainty-quantification method for wide neural-network predictions. Developing such solvers in higher-dimensional and practically relevant regimes, and assessing their numerical accuracy and computational cost, appears to be an interesting direction for future work. 
The code used to produce the numerical results of this section is available at \url{https://github.com/pstos/Fluctuations}.

\noindent
\subsection*{Acknowledgements} 
A.D.\ was partially supported by the FLUTE project (EC grant No. 101095382) and by the French State under the France 2030 program (ANR-23-PEIA-005, REDEEM project). Part of this work was carried out while A.D. was a postdoctoral researcher at Inria Lille – Nord Europe.
G.L.\ was partially funded by the NumOpTES project (ANR-22-CE46-0005) of the French National Research Agency. This work was partially carried out while G.L. was a postdoctoral researcher at Institut de Mathématiques de Bordeaux, CNRS-UMR5251.
A.G.\ is supported by the ANR-23-CE-40003, Conviviality, and has benefited from a government grant managed by the Agence Nationale de la Recherche under the France 2030 investment plan ANR-23-EXMA-0001.
B.N.\ is supported by the grant IA20Nectoux from the Projet I-SITE Clermont CAP 20-25. 
P.S.\ is supported by the Projet I-SITE Clermont CAP 20-25.

\bibliographystyle{amsplain}
\bibliography{fluctuations}

\appendix

\section{Online Monte Carlo variance estimator and relative error}
\label{app:Monte-Carlo}

We detail the online computation of the Monte Carlo estimator used in Section~\ref{sec:numerics} for the 
finite-width variance
\[
  V_N(x)
  =
  N\Var\big(\widehat y_{\theta_{\lfloor Nt\rfloor}}(x)\big)
\]
with width $N \ge 1$, training time $t \ge 0$, and input $x \in [-1,1]$ fixed.
After $M$ independent trainings, we obtain, for each prediction point $x \in [-1,1]$, the independent output values
\[
  \widehat y^{(1)}(x),\,\ldots,\,\widehat y^{(M)}(x).
\]
For $1\le q\le M$ and $p\in\{2,3,4\}$, set
\[
  \overline{y}_q(x)
  =
  \frac1q\sum_{m=1}^q \widehat y^{(m)}(x),
  \qquad
  S_{p,q}(x)
  =
  \sum_{m=1}^q
  \big(
    \widehat y^{(m)}(x)-\overline{y}_q(x)
  \big)^p.
\]
With this notation, the Monte Carlo (unbiased) estimator of $V_N(x)$ in \eqref{eq:VMC} is
\begin{equation}\label{eq:VMC2}
  \widehat V(x)
  =
  \frac{N}{M-1}S_{2,M}(x).
\end{equation}

We now describe how the quantities $\overline y_q(x)$ and $S_{p,q}(x)$ are accumulated online. The recursion is initialized by
\[
  \overline{y}_1(x)=\widehat y^{(1)}(x),
  \qquad
  S_{2,1}(x)=S_{3,1}(x)=S_{4,1}(x)=0.
\]
Assume now that $\overline{y}_q(x)$ and $S_{p,q}(x)$, $p\in\{2,3,4\}$, have been computed after $q \ge 1$ runs. Given the next output value $\widehat{y}^{(q+1)}(x)$, set
\[
  \Delta_{q+1}(x)
  =
  \widehat y^{(q+1)}(x)-\overline{y}_q(x),
  \qquad
  \delta_{q+1}(x)
  =
  \frac{\Delta_{q+1}(x)}{q+1},
  \qquad
  T_{q+1}(x)
  =
  q\,\Delta_{q+1}(x)\delta_{q+1}(x).
\]
Hence, $\Delta_{q+1}(x)$ is the deviation of the new output from the previous running mean, $\delta_{q+1}(x)$ is the increment of the running mean, and $T_{q+1}(x)$ is the corresponding correction to the second centered moment.
With these increments, Welford's update for the empirical mean reads
\[
  \overline{y}_{q+1}(x)
  =
  \overline{y}_q(x)+\delta_{q+1}(x),
\]
and the second centered moment accumulator is updated by
\[
  S_{2,q+1}(x)
  =
  S_{2,q}(x)+T_{q+1}(x).
\]
If the third and fourth centered moments are also tracked, their online
updates are
\[
\begin{aligned}
  S_{3,q+1}(x)
  &=
  S_{3,q}(x)
  -3\delta_{q+1}(x)S_{2,q}(x)
  +T_{q+1}(x)\delta_{q+1}(x)(q-1),
  \\
  S_{4,q+1}(x)
  &=
  S_{4,q}(x)
  -4\delta_{q+1}(x)S_{3,q}(x)
  +6\delta_{q+1}(x)^2S_{2,q}(x)
  +T_{q+1}(x)\delta_{q+1}(x)^2(q^2-q+1).
\end{aligned}
\]
All these updates are performed independently at each prediction point $x$ on the output grid. At $q=M$, they yield the desired estimator $\widehat{V}(x)$ (see~\eqref{eq:VMC2}).

We next estimate the relative Monte Carlo error of this variance estimator. Set
\[
  v_N(x)
  =
  \frac{V_N(x)}{N}
  =
  \Var\big(\widehat y_{\theta_{\lfloor Nt\rfloor}}(x)\big),
\]
and let
\[
  m_{4,N}(x)
  =
  \E\Big[
    \big(
      \widehat y_{\theta_{\lfloor Nt\rfloor}}(x)
      -
      \E\big[\widehat y_{\theta_{\lfloor Nt\rfloor}}(x)\big]
    \big)^4
  \Big]
\]
be the fourth centered moment of the finite-width output at $x \in [-1,1]$. For $q\ge2$, the variance of the unbiased sample variance $S_{2,q}(x)/(q-1)$
satisfies
\[
  \Var\left(\frac{S_{2,q}(x)}{q-1}\right)
  =
  \frac1q
  \left(
    m_{4,N}(x)
    -
    \frac{q-3}{q-1}v_N(x)^2
  \right).
\]
This gives the plug-in estimator
\[
  \widehat{\Var}\left(\frac{S_{2,q}(x)}{q-1}\right)
  =
  \frac1q
  \left(
    \frac{S_{4,q}(x)}{q}
    -
    \frac{q-3}{q-1}
    \left(\frac{S_{2,q}(x)}{q-1}\right)^2
  \right).
\]
Therefore, the estimated relative standard error reported in Section~\ref{sec:numerics} is
\[
  \widehat{\mathrm{RSE}}_M(x)
  =
  \frac{
      \widehat{\Var}\big(S_{2,M}(x)/(M-1)\big)^{1/2}
  }{
    S_{2,M}(x)/(M-1)
  }.
\]

\end{document}